\newtheorem{theorem}{Theorem}[section]
\newtheorem{lemma}[theorem]{Lemma}
\newtheorem{problem}{Problem}
\begin{document}

% paper title
%\title{Geometric Uncertainty Model with Multi-Resolution View Selection }

\title{View Selection with Geometric Uncertainty Modelling}

% You will get a Paper-ID when submitting a pdf file to the conference system
%\author{Author Names Omitted for Anonymous Review. Paper-ID [87]}

\author{\authorblockN{Cheng Peng}
\authorblockA{College of Science and Engineering\\
University of Minnesota,\\
Minneapolis, MN, 55414\\
Email: peng0175@umn.edu}
\and
\authorblockN{Volkan Isler}
\authorblockA{College of Science and Engineering\\
University of Minnesota,\\
Minneapolis, MN, 55414\\
Email: isler@umn.edu}}

% avoiding spaces at the end of the author lines is not a problem with
% conference papers because we don't use \thanks or \IEEEmembership

% for over three affiliations, or if they all won't fit within the width
% of the page, use this alternative format:
% 
%\author{\authorblockN{Michael Shell\authorrefmark{1},
%Homer Simpson\authorrefmark{2},
%James Kirk\authorrefmark{3}, 
%Montgomery Scott\authorrefmark{3} and
%Eldon Tyrell\authorrefmark{4}}
%\authorblockA{\authorrefmark{1}School of Electrical and Computer Engineering\\
%Georgia Institute of Technology,
%Atlanta, Georgia 30332--0250\\ Email: mshell@ece.gatech.edu}
%\authorblockA{\authorrefmark{2}Twentieth Century Fox, Springfield, USA\\
%Email: homer@thesimpsons.com}
%\authorblockA{\authorrefmark{3}Starfleet Academy, San Francisco, California 96678-2391\\
%Telephone: (800) 555--1212, Fax: (888) 555--1212}
%\authorblockA{\authorrefmark{4}Tyrell Inc., 123 Replicant Street, Los Angeles, California 90210--4321}}

\maketitle

\begin{abstract}
  Estimating positions of world points from features observed in
  images is a key problem  in 3D reconstruction, image mosaicking,
  simultaneous localization and mapping and structure from motion. We
  consider a special instance in which there is a dominant ground plane $\mathcal{G}$ viewed from a parallel viewing plane $\mathcal{S}$ above it.
  Such instances commonly arise, for example, in aerial
  photography.
  
  Consider a world point $g \in \mathcal{G}$ and its worst case
  reconstruction uncertainty $\varepsilon(g,\mathcal{S})$ obtained by
  merging \emph{all} possible views of $g$ chosen from $\mathcal{S}$.  We
  first show that one can pick two views $s_p$ and $s_q$ such
  that the uncertainty $\varepsilon(g,\{s_p,s_q\})$ obtained using
  only these two views is almost as good as (i.e. within a small constant factor of)
  $\varepsilon(g,\mathcal{S})$. Next, we extend the result to the
  entire ground plane $\mathcal{G}$ and show that one can pick a small
  subset of $\mathcal{S'} \subseteq \mathcal{S}$ (which grows only
  linearly with the area of $\mathcal{G}$) and still obtain a constant
  factor approximation, for every point $g \in \mathcal{G}$, to the
  minimum worst case estimate obtained by merging all views in
  $\mathcal{S}$. 
  Finally, we present a multi-resolution view selection method which extends our techniques to non-planar scenes.
 We show that the method can
    produce rich and accurate dense reconstructions with a small number of views.

  Our results provide a view selection mechanism with provable
  performance guarantees which can drastically increase the speed of
  scene reconstruction algorithms. In addition to theoretical results,
  we demonstrate their effectiveness in an application where aerial
  imagery is used for monitoring farms and orchards.
\end{abstract}

\IEEEpeerreviewmaketitle

\section{Introduction}
%Estimating positions of world points from features observed in images
%is a key problem which arises in image mosaicking, simultaneous
%localization and mapping and structure from motion.  
%As a motivating
%application we will refer to throughout the paper, 
Consider a scenario
where a plane flying at a fixed altitude is capturing images of a
ground plane below so as to reconstruct the scene (Figure~\ref{fig:denseComp10}).  Over the
course of its flight, the plane may capture thousands of
images which can easily overwhelm image reconstruction algorithms.
%\vtxt{For example, consider the ... (Our five min flight example)}
Our goal in this paper is to answer the question of whether we can select a
small number of images and focus only on them without reducing the
reconstruction quality. 

% \begin{figure}
% \centering
% 	\includegraphics[width=1\columnwidth]{}
% \caption{Reconstruction using 416 (left mosaic) images obtained from 15 meters.
% We will show that it is possible to obtain the same quality with a much smaller number of images (84 images right mosaic). Note that the scene is not strictly planar as the trees are ~3 meters tall.}
% % A coarse grid was imposed to select 59 views from the images(right).}
% %The original 414 images SFM reconstruction with proposed grid}
% \label{fig:gridOrigin}
% \end{figure} 

\begin{figure}
\centering
	\includegraphics[width=1\columnwidth]{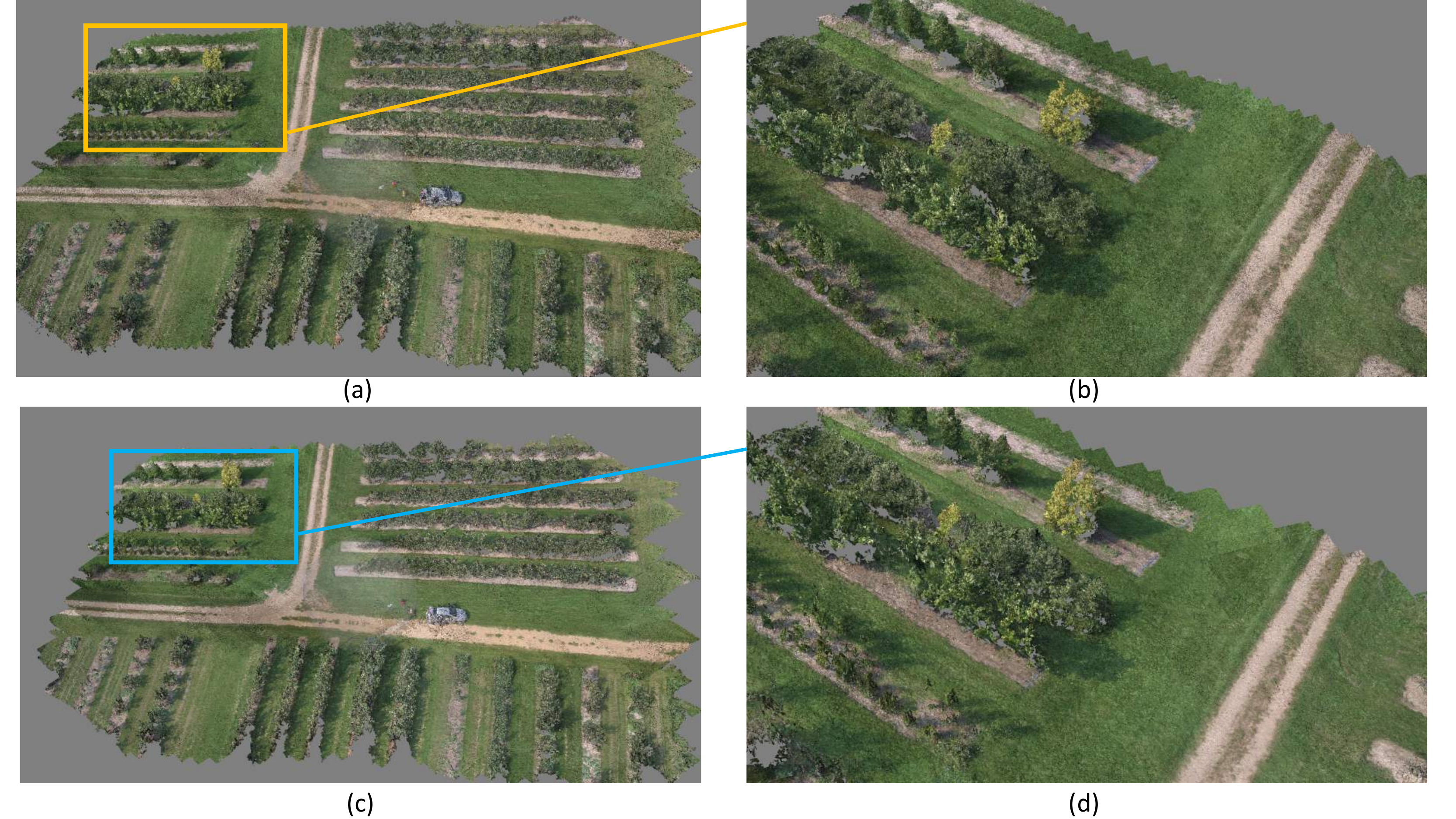}
	\caption{Comparison of dense reconstruction of the orchard from images taken at 10 meters altitude. (a) Dense Reconstruction using 893 images (b) Closeup view of the detailed reconstruction of the tree rows (c) Dense Reconstruction using 266 images extracted using our multi-resolution view selection method (d) Closeup view of the same tree row. }
\label{fig:denseComp10}
\end{figure} 

We first study a basic version where we focus on a single world point.
The goal is to select a small number of images from which the 3D
position of the world point can be accurately estimated
(Problem~\ref{prob:singlep}).  We then present a general version where
the goal is to minimize the error for the entire scene
(Problem~\ref{prob:multip}) from a small set of images. Note that in
the latter case, the same set of images must be used for every scene
point. We also extended our approach to a multi-resolution view selection 
scheme to accommodate non-planar scenes. 

In order to formalize these two problems, we first need to formalize
the error model and the uncertainty objective.  Let $g$ be a world point and $I$ be an
image taken from a camera at position $s$ and orientation $\theta$.
Let $p$ be the observed projection of $g$ onto $I$ and $p^*$ be the
unobserved true projection represented as vectors originating from the
camera center $s$. We will employ a {\em bounded uncertainty model}
where we will assume that the angle between $p$ and $p^*$ is bounded
by a known (or desired) quantity $\alpha$. 
Therefore, the 3D location
of the world point $g$ is contained inside a cone $C$ apexed at $s$
and with symmetry axis along $p$ and cone angle $2 \alpha$.
See Figure~\ref{fig:cone}.

\textit{Merging measurements:} In order to estimate the true location
of a world point from multiple measurements, we simply intersect the
corresponding cones. The diameter of the intersection is used as an
uncertainty measure. We chose diameter over the volume so as to avoid
degenerate cases where the intersection has almost zero volume but
large diameter which could still generate large triangulation error.

\textit{Uncertainty as worst-case reconstruction error:} 
Rather than associating a single cone for a specific measurement, our formulation considers a possibly infinite set of viable cones for a given true camera pose and world point pair. To do this, we consider all possible perturbations of relevant quantities (projection, location or pose).
When merging measurements, we consider the worst-case scenario  which maximizes the reconstruction uncertainty. 
 This formulation gives us a deterministic worst-case error model. It also allows us to factor out unknown or uncontrollable quantities such as camera orientation.
 
 \begin{figure}
\centering
	\includegraphics[width=0.5\textwidth]{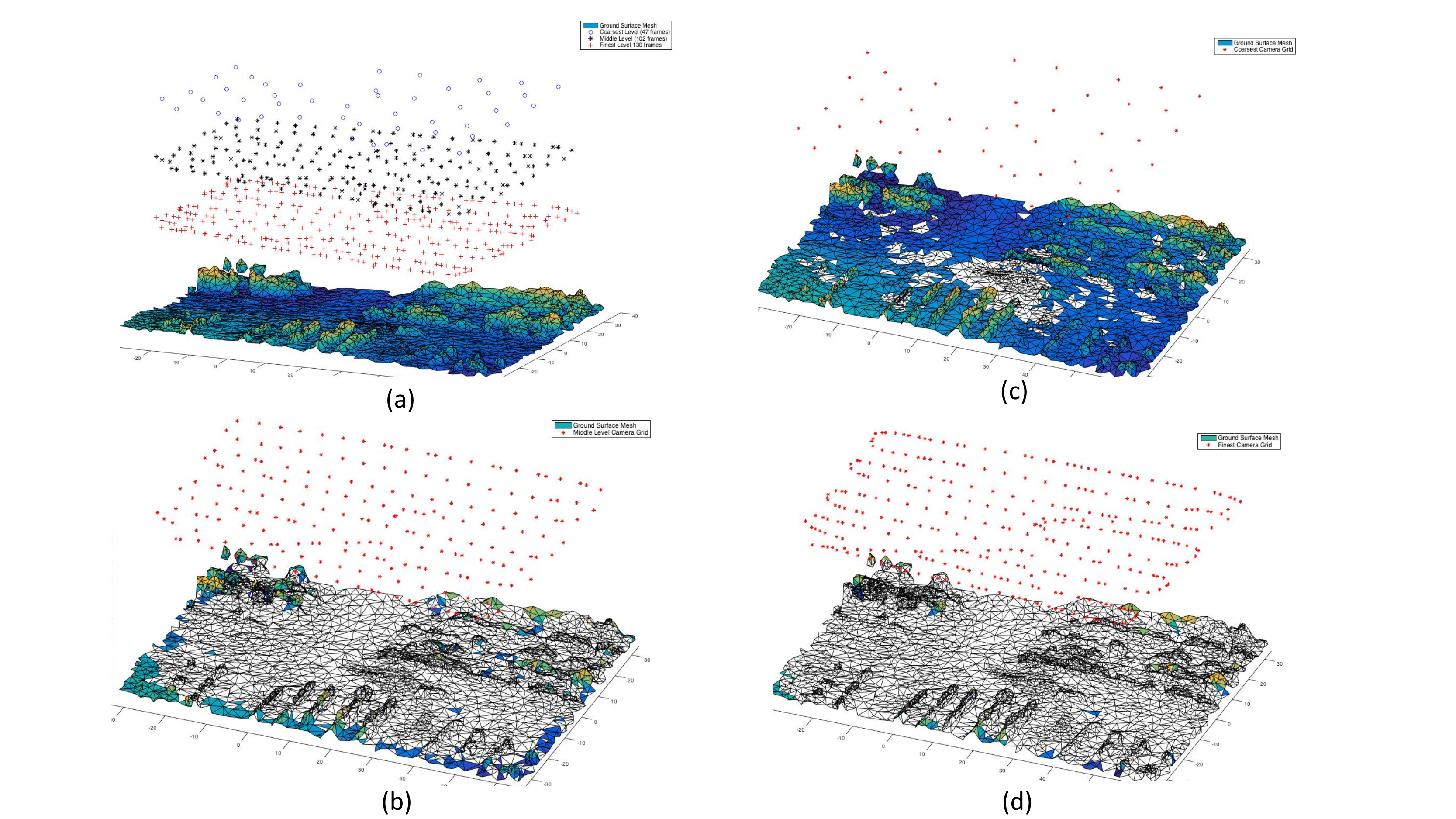}
	\caption{View Selection at Multiple resolution to cover the mesh region, where the color is the height and the white region is the covered region at each level: (a) View Selection at 3 resolution shown in blue, black and red. (b) View Selection at the Coarsest Level (c) View Selection at the Middle Level (d) View Selection at the Finest Level. Note that the coarser views cover partial planar region while the finer selection populates the more complex regions }
\label{fig:viewselection}
\end{figure} 

\section{Contributions and Related Work}
The importance of view selection for scene reconstruction is well
established.  One of the first view selection schemes for multi-view
stereo is presented in~\cite{farid1994view}.  The work of Maver and
Bajcsy~\cite{maver1993occlusions} and Kutulakos and Dyer~\cite{kutulakos1994recovering} use
contour information to choose viewing locations. 
A 2003 paper by Scott et al.~\cite{scott2003view} surveys
view selection methods.  Recently, Furukawa et
al.~\cite{furukawa2010towards} proposed a view selection scheme to
enable large scale 3D reconstruction.  Their method relies on
clustering images based on overlap. The resulting optimization problem
is solved iteratively.  The method of Hornung et
al.~\cite{hornung2008image} incrementally selects images and uses a
proxy to ensure coverage.  Mauro et al. resort to linear programming
to solve the view selection problem~\cite{mauro2014integer}. Sub-modular optimization~\cite{krause2014submodular}
has also been considered to jointly optimize the coverage and accuracy. 
However, it requires repeated visit of the same region. Both \cite{krause2014submodular} and \cite{hoppe2012photogrammetric}
uses surface meshes as geometrical reference to reason about optimal view selection. 
View selection has also been involved in image based modeling~\cite{vazquez2003automatic}, object
retrieval~\cite{gao2011less} and target
localization~\cite{isler2008sensor}. 

In the general reconstruction domain, key-frame methods \cite{klein2007parallel} \cite{murORB2} \cite{engel2014lsd} implement heuristics such as visible map features, distance between key-frames to decide if the current frame should be used for mapping. The main idea is to reduce the number of frames for bundle adjustment so as to make the system work in real-time. Mur-Artal et al.~\cite{murORB2} introduced the ``essential-graph" which builds a spanning tree from the image graph to achieve real-time performance. 
Snavely et al. \cite{snavely2008skeletal} proposed a method called ``skeleton set" that selects a subset of frames from the image graph to achieve similar reconstruction accuracy. However, they do not consider the geometry of the mapped environments. In Kaucic et al. \cite{planeBasedRecon}, the environment is assumed to be planar and the factorization method~\cite{sturm1996factorization} is used to speed up the bundle adjustment. 

In the present work, we consider an abstraction of the problem as: cameras on a viewing plane observing a planar world scene. We present a novel uncertainty model which allows us to characterize worst-case reconstruction error in a way that is independent of particular measurements. What differentiates our work from the previous body of work is that we present a  view selection mechanism with theoretical performance guarantees. Specifically, our \textbf{contributions} are the following. 
\begin{enumerate}
    \item We show that one can select two good views and obtain a reconstruction which is almost as good as merging all possible views from the entire viewing plane.
    \item We also show that a coarse camera grid (of resolution proportional to the scene depth) can provide a good reconstruction of the entire world plane.
    \item We present a multi-resolution view selection method which can be used for more general environments that are not strictly planar. 
\end{enumerate}

Our work is also related to error analysis in
stereo~\cite{sahabi1996analysis,cheong1998effects}. 
There are also many different uncertainty models. Bayram et al. \cite{Bayram2016sensor} models the bearing measurement's uncertainty as a function of linearized intersection area. Davison~\cite{davison2003real} approximates the uncertainty as a Gaussian distribution. 
We contribute to this line of work by analyzing the reconstruction error for two (best)
cameras with respect to the reconstruction error achievable by using
all possible cameras for the particular geometry we consider.

\section{Problem Definition}
 In this section, we introduce the general sensor selection problem.
%We start with a basic version where the goal is to choose a small number 
Consider the world point $g \in \mathcal{G}$ and a camera $(s, \theta)$ where $s \in \mathbb{R}^3$ is the projection center and $\theta \in SO(3)$ is the  orientation.
%and a set of camera centers $\mathcal{S} = \{s_1,s_2,...,s_k\}$. Each camera $s_i \in \mathcal{S}$ has orientation $\theta_i \in SO(3)$. 
Suppose we have a set of measurements $\{p_1, \ldots, p_k\}$ where each $p_i$ is expressed as a unit vector pointing towards the observed pixel and anchored at the corresponding camera center.
%of the  unobserved true projection of $g$ onto camera $(s,\theta)$. 
We need a function $f(p_1,p_2,...,p_k) = \hat{g}$ that maps measurements to $\hat{g}$, the estimate of $g$. This way, we can define the estimation error to be $||g - \hat{g}||$ by choosing an error measure $||\cdot||$.

 \begin{figure}[h]
 \centering
 	\includegraphics[width=0.4\columnwidth]{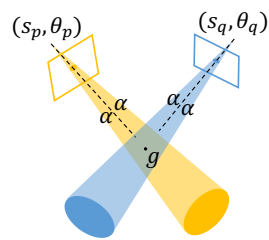}
 	\caption{Right circular cone for camera $(s,\theta)$ viewing target $g$}
 	\label{fig:cone}
 \end{figure}

In this paper, we will consider the following ``bounded uncertainty" characterization of the error:
 Consider the true measurement $p^* = Proj((s,\theta),g)$ given by the projection of $g$ onto camera $(s,\theta)$ which is also represented as a vector from $s$ pointing toward $g$. 
We make the assumption that the angle between the measurement $p$ and the true projection $p^*$ is bounded by a fixed threshold $\alpha$.
%In reality, we do not obtain $p^*$. 
%Instead, we obtained a corrupted measurement $p$ and $p^*$ is assumed to lie inside a bounded region around $p$. Geometrically, this expressed as a bound $\alpha$ on the angle between $p$ and $p^*$. 
For a given measurement $p$, the rays corresponding to all  possible $p^*$ formulate to a cone denoted as $Cone_\alpha((s,\theta),p)$ as shown in Fig~\ref{fig:cone}, which is a function of  both the camera parameters $s$ and $g$ as well as  the measurement $p$.
%$s,p,g,\alpha$, where $\alpha$ is the cone angle as shown in Fig~\ref{fig:cone}.
For the rest of the paper, we will assume a fixed $\alpha$ and drop the subscript.
%Let $Cone((s,\theta),g) = Cone_\alpha((s,\theta),g)$ for some fixed $\alpha$.  
%The estimation error then corresponds to $\varepsilon(g) = ||g-\hat{g}|| = 
By intersecting the cones from multiple measurements $p_i$ from views $(s_i, \theta_i)$, we can get an estimate of the true target location.
The uncertainty is given by  the diameter of the intersection given by
$||\cap Cone((s_i,\theta_i),p_i)||$.

%Let $M = \{p_1,p_2,...,p_k\}$ be a set of measurements obtained by projecting target $g$ such that $p_i = Proj((s_i,\theta_i),g)$.  Now we can formulate the first problem as to $\min|M'|$ for $M' \subseteq M$
%\begin{equation}\label{eq:singlexInitF}
%||g - f(M')|| \leq \rho ||g - f(M)||
%\end{equation}
%where $\rho \in \mathbb{R}$.
%
%The first problem mainly focuses on a single point $g$. For a set of points $g \in \mathcal{G}$, we can formulate the second problem as to $\min|M'|$ for $M' \subseteq M$ such that 
%
%\begin{equation}\label{eq:allxInitF}
%\max_{g \in \mathcal{G}}||g - f(M')|| \leq \phi \max_{g \in \mathcal{G}}||g - f(M)||
%\end{equation}
%where $\phi \in \mathbb{R}$. 
%
%

%In this paper, we will study the selection of good camera locations. 

For sensor selection purposes, rather than  a single cone, it is beneficial to associate a set of cones for each measurement.
This will allow us to replace the randomness in the measurement process with a deterministic {\em worst-case analysis.}
%In order to remove the dependency on the orientation $\theta$, we define the {\em worst-case estimation error} as the orientation varies. 
To do this, for a given true  target location $g$ and a camera pose $(s, \theta)$, we generate $p^* = Proj((s,\theta),g)$. Then for every possible measurement $p$ within  angle $\alpha$ of $p^*$, we define $Cone((s,\theta),p)$ and include it with the set $S(g, s, \theta)$ associated with this world point/camera pair. Note that each cone in the set includes the true location $g$.
We can further eliminate the dependency on camera orientation by taking the union of these sets for each allowable orientation.
That is, we define $S(g, s) = \bigcup_\theta S(g, s, \theta)$  with the additional requirement that $g \in Cone((s,\theta),p)$ for each cone included in the union.

%we generate all possible measurement cones  $\mathcal{C}_\alpha(s,g) = \{Cone((s,\theta),p^*):  \theta \in SO(3), g \in Cone((s,\theta),p^*) \}$ where $p^* = Proj((s,\theta),g)$ is the true projection.
%For the remainder of the paper we will use a shorthand notation $Cone((s,\theta),g) = Cone((s,\theta), Proj((s,\theta),g))$ which is a cone apexed at $s$, apex angle $2 \alpha$ and orientation \vtxt{need to discuss this -- also must include $g$}

We can now define the worst case uncertainty for a given set $\mathcal{S} = \{s_1,s_2,...,s_k\}$  of camera centers and a ground point $g$ as:
% with corresponding measurements $M = \{p_1,p_2,...,p_k\}$:

%\begin{equation}
$$\varepsilon(g, \mathcal{S}) = \max_{Cone_1 \in S(g,s_1), \ldots, Cone_k \in S(g, s_k)} ||\cap Cone_i ||$$

%\end{equation}
%where the domain of each $\theta_i$ is defined such that  $Cone((s_i,\theta_i),g) \in \mathcal{C}(s_i,g)$.
In other words, for each camera location $s_i$, a cone is chosen such that the chosen cones \emph{jointly} maximize the intersection diameter.
The advantage of this formulation is that since  the computation of $\varepsilon(g, \mathcal{S})$ implicitly generates all possible measurements for a given camera location and world point, it generates a worst case uncertainty independent of specific measurements and camera rotations.
%which finds the maximum diagonal in the intersection region as the uncertainty measurement. 
We are now ready to define the first problem.

\begin{problem}
For a given world point $g$, the set of all possible viewpoints $\mathcal{S}$, a projection error bound $\alpha$, 
and an error tolerance parameter $\rho \in \mathbb{R}$,
choose a minimum cardinality subset $\mathcal{S'} \subseteq \mathcal{S}$, such that
%$\min|\mathcal{S'}|$ for $\mathcal{S'} \subseteq \mathcal{S}$ such that 
$$\varepsilon(g,\mathcal{S'}) \leq \rho \varepsilon(g,\mathcal{S})$$
%where $\rho \in \mathbb{R}$.
\label{prob:singlep}
\end{problem}

In Problem~\ref{prob:singlep} the goal is to choose a small subset of camera locations whose worst case uncertainty when reconstructing a given point $g$ is at most with a factor $\rho$ of the worst-case uncertainty of the entire viewing set. Problem~\ref{prob:multip} generalizes it to multiple points.

%
%\begin{problem}
%For a set of points $g \in \mathcal{G}$, the problem becomes $\min|\mathcal{S'}|$ for $\mathcal{S'} \subseteq \mathcal{S}$ such that 
%$$\max_{g \in \mathcal{G}} \varepsilon(g,\mathcal{S'}) \leq \phi \max_{g \in \mathcal{G}} \varepsilon(g,\mathcal{S})$$
%where $\phi \in \mathbb{R}$.
%\label{prob:multip}
%\end{problem}

\begin{problem}
For a set of points $G \subseteq \mathcal{G}$, 
the set of all possible viewpoints $\mathcal{S}$, a projection error bound $\alpha$, 
and an error tolerance parameter $\phi \in \mathbb{R}$,
choose a minimum cardinality subset $\mathcal{S'} \subseteq \mathcal{S}$, such that
%the problem becomes $\min|\mathcal{S'}|$ for $\mathcal{S'} \subseteq \mathcal{S}$ such that 
$$\max_{g \in {G}} \varepsilon(g,\mathcal{S'}) \leq \phi \max_{g \in \mathcal{G}} \varepsilon(g,\mathcal{S})$$
%where $\phi \in \mathbb{R}$.
\label{prob:multip}
\end{problem}

In this paper, we study a specific geometric instance of these problems where  $\mathcal{G}$ and $\mathcal{S}$ are two parallel planes with distance $h$ apart. 
For a given $g \in \mathcal{G}$, we will define $\varepsilon_{\infty}(g) = \varepsilon(g,\mathcal{S})$.
%\vtxt{We will show that ...}
%
%The number of cameras in $\mathcal{S}$ are unbounded. Therefore $\varepsilon(g,\mathcal{S}) = \max_{g \in \mathcal{G}} \varepsilon(g, \mathcal{S})$. The number of cameras in $\mathcal{S}$ is also unbounded, thus we define $\varepsilon_{\infty}(g) = \varepsilon(g,\mathcal{S})$ as the minimum worst case uncertainty. 
%
%The main contribution of this paper is that for Problem~\ref{prob:singlep}, we can find two sensors $s_i,s_j$ for a single target $g$ such that 
%$$
%\varepsilon(g,\{s_i,s_j\}) \leq \sqrt{\frac{1+2\alpha}{1-4\alpha}} \cdot \varepsilon_{\infty}
%$$
%The second contribution is that for Problem~\ref{prob:multip}, we propose a camera grid for a region $\mathcal{G}$ with area $Area(\mathcal{G})$
\section{Sensor Selection for a Single Point}
In this section, we study Problem 1 where the goal is to choose cameras to reconstruct a single point.
We will start with the two dimensional (2D) case where the ground and viewing planes reduce to lines, and the uncertainty cones become wedges.

Our key result in this section is that for any point $g$, one can choose two cameras whose worst case uncertainty $\varepsilon_{2}(g)$ is almost as good as $\varepsilon_{\infty}(g)$ , which is the worst case uncertainty obtained by merging the views from {\em all} cameras. The key ideas in obtaining this result are: (1)~if we choose two cameras at locations $p$ and $q$ who view $g$ symmetrically at 90 degrees (i.e. $\angle p g q = \pi/2$), the diagonals of the worst-case uncertainty polygon (the intersection of the two wedges) are roughly of equal length. (2)~Any other camera added to the sensor set can be rotated to contain the horizontal diagonal. Therefore, it does not reduce the uncertainty drastically. 
%\vtxt{say something about the model: spherical cameras rotation and measurement uncertainty folded in a single parameter $\alpha$ and justify single plane}
%Later, in Section~\ref{xxx},  we generalize the results to the three dimensional (3D) case.

%
%will analyze the minimum worst case uncertainty $\varepsilon_{\infty}(g)$ of target $g \in \mathcal{G}$ starting from 2-D and extend the result to 3-D. 
%
%In planar case, we first choose two cameras $s_p$ and $s_q$ that creates a convex polygon $Q_{pq}$ by intersecting their cones viewing at a target $g$ with worst case uncertainty $\varepsilon_2$. A closed-form solution for all the edges and diagonals of $Q_{pq}$ is omitted. We then show that the vertical diagonal is maximized for some orientation of the cameras. 
%Later, we prove that any camera in in $\mathcal{G}$ will contain the vertical diagonal when maximized over the orientation, which results in the lower bound for $\varepsilon_\infty$. Finally, the upper bound for $\varepsilon_\infty$ is also derived. Therefore, we show that using only two cameras, the worst case uncertainty can be bounded such that $\varepsilon_2 \leq \sqrt{\frac{1+2\alpha}{1-4\alpha}} \cdot \varepsilon_\infty$. 

\subsection{The Solution of Problem 1 in 2D}
%In the 2D planar case, the ground plane G and the view plane S reduce to two parallel lines with distance $h$.
%the cameras and target will appear on the lines $g \in G$ and $s \in S$ respectively and the distance between $G$ and $S$ remains to be $h$ as the height. The number of cameras in $S$ is unbounded.  
%The minimum worst case uncertainty in 2-D is given as

%\begin{figure}[h]
%\centering
%	\includegraphics[width=0.9\columnwidth]{fig/cone2d.jpeg}
%	\caption{Uncertainty cone in 2-D}
%	\label{fig:cone2d}
%\end{figure} 

%\begin{equation}\label{eq:opt2d}
%\varepsilon_{\infty}(g) = \max_{Cone_i \in S(s_i,g), \forall s \in S} ||\cap Cone_i||
%\end{equation}
%where the $Cone(s,g)$ is a wedge in 2-D with angle $\alpha$ and apex at $s$ as shown in Fig~\ref{fig:cone2d}. 
%The set of cones that can contains target $g$ is then $C(s,g) = \{ Cone((s,\theta),g): g \in Cone((s,\theta),g), \forall \theta \in SO(3) \}
%$.
Let $A = \arg \max(\varepsilon_\infty(g))$ be the set of wedges which yield the minimum worst case uncertainty. 
For every point $c$ on the viewing plane, there is a wedge in $A$ which (i)~ is apexed at $c$, (ii) has wedge angle $\alpha$ and (iii)~contains $g$.
By definition of $\varepsilon_\infty(g)$, the wedges are rotated so as to maximize the diagonal of the intersection.
% where orientation $\theta$ has been fixed for each camera.

%\vtxt{is this right? Check consistent clockwise-counterclockwise}.
%We use the angle $\theta$ between the bisector of a wedge with respect to $S$ for the edge orientation.
%Of the two half-planes whose intersection yields the edge, the inner half plane is the one that is closer to $S$ -- i.e. the angle measured is smaller \vtxt{????}.
%
%We use $\theta$ to denote the orientation of a wedges defined as the smaller angle between $S$ and center line of the wedge such that $\theta \leq \frac{\pi}{2}$ (Figure~\ref{xxx})).
%A wedge is the intersection of two half-planes: 
%we define the inner half-plane to be the half-plane that creates a smaller angle $(\theta-\alpha)$ w.r.t. $S$. The outer half-plane is the other half-plane with a larger angle w.r.t. $S$. 
%\vtxt{need to phrase these a bit better after the figure is in -- you may just want to mark them on the figure}

%\begin{theorem}\label{thrm:opt2d}
%Given a target $g \in G$ and a set of cameras $s \in S$, where the distance between $G$ and $S$ is $h$ and the number of cameras in $S$ is unbounded, we claim the minimum worst case uncertainty denoted as $\varepsilon_\infty = \varepsilon(g,S)$ is
%\begin{equation}\label{eq:opt2cameras2d}
%\frac{2h\sin(2\alpha)}{1-\sin{2\alpha}} \leq \varepsilon_\infty \leq \sqrt{\frac{1+2\alpha}{1-4\alpha}}\frac{2h\sin(2\alpha)}{1-\sin{2\alpha}}
%%\varepsilon_\infty \leq \sqrt{\frac{1+2\alpha}{1-4\alpha}}\frac{4h\alpha}{1-2\alpha}
%\end{equation}
%\end{theorem}

\begin{theorem}\label{thrm:opt2cameras2dbound}
Consider a target $g$ on line $G$ and viewing set $S$ composed of all camera locations on $S$ parallel to $G$.
%Given a target $g \in G$ and a viewing set  $S$, such that the distance between $G$ and $S$ is $h$ and the number of cameras in $S$ is unbounded, we claim that 
There exist two cameras $s_p$ and $s_q$ which guarantee that
\begin{equation}\label{eq:opt2cameras2dbound}
\varepsilon_2 \leq \sqrt{\frac{1+2\alpha}{1-4\alpha}}  \varepsilon_\infty
\end{equation}
where $\varepsilon_\infty = \varepsilon(g,S)$  is the minimum worst case uncertainty of the entire viewing set, and 
$\varepsilon_2$ is the worst case uncertainty of  $\{s_p, s_q \}$ and $0 \leq \alpha < 1/4$ is the error threshold measured in radians.
%
%is $\varepsilon_\infty = \varepsilon(g,S)$ and worst case uncertainty from two cameras $s_p$ and $s_q$ is $\varepsilon_2$.
\end{theorem}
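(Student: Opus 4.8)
The plan is to reduce everything to an explicit planar computation. Place $g$ at the origin with $G$ the $x$-axis and $S$ the line $y=h$, and take the two cameras to be the symmetric pair $s_p=(-h,h)$, $s_q=(h,h)$, so that $\angle s_p g s_q=\pi/2$, as suggested by idea~(1). For a camera at $(c,h)$ an admissible wedge is a sector of angular width $\alpha$ with apex at $(c,h)$ that may be rotated freely as long as it still contains $g$; thus $\varepsilon_2=\max|W_p\cap W_q|$ over admissible rotations of the two chosen wedges, while $\varepsilon_\infty=\max|\bigcap_c W_c|$ over admissible choices of one wedge per camera. One first observes the trivial inclusion bound $\varepsilon_\infty\le\varepsilon_2$: any joint choice of wedges is cut down by the two at $s_p$ and $s_q$. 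So the content of the theorem is the reverse estimate $\varepsilon_2\le\sqrt{(1+2\alpha)/(1-4\alpha)}\;\varepsilon_\infty$.

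For the upper bound on $\varepsilon_2$ I would analyze the extremal rotations of $W_p$ and $W_q$. Since $s_pg\perp s_qg$, all four bounding rays of $W_p$ and $W_q$ point within angle $\alpha$ of the mutually orthogonal directions $(1,-1)$ and $(1,1)$, so $W_p\cap W_q$ is a convex quadrilateral inscribed in a rectangle with sides parallel to those directions. The adversary gains the most by rotating each wedge so that $g$ lies on its near edge, so that the body of the wedge extends beyond $g$, away from its apex, where it is wider; the far corner of the intersection then lies at a controlled distance (at most $h\sqrt2$ times a factor that stays bounded for $\alpha<1/4$) from each camera, where the wedge has correspondingly bounded linear width. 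Taking the diagonal $\sqrt{a^2+b^2}$ of the bounding rectangle and replacing the trigonometric factors by the crude linear bounds valid on $[0,1/4)$ yields $\varepsilon_2\le(\text{explicit})\cdot h\alpha$; in the same computation one verifies idea~(1), namely that the two diagonals of the extremal quadrilateral $Q^\star$ differ by at most the factor $\sqrt{(1+2\alpha)/(1-4\alpha)}$.

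For the lower bound on $\varepsilon_\infty$ I would exhibit a single segment $d$ through (or immediately beside) $g$ that every camera can keep inside its wedge; then $d\subseteq\bigcap_c W_c$ for that choice and $\varepsilon_\infty\ge|d|$. Following idea~(2), take $d$ to be the shorter diagonal of $Q^\star$, so that $|d|\ge\varepsilon_2/\sqrt{(1+2\alpha)/(1-4\alpha)}$ by the previous step, and for each camera at $(c,h)$ use the rotation of its wedge that centers it on $d$. What must be checked is that the angle subtended by $d$ at $(c,h)$ never exceeds $\alpha$; a short calculation shows that this subtended angle, viewed as a function of $c$, is maximized not at $c=0$ (the camera directly above $g$) but at an intermediate offset $c^\star$ determined by $|d|$ and $h$, and that at $c^\star$ it equals exactly $\alpha$ for the extremal $Q^\star$. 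Hence $\varepsilon_\infty\ge|d|\ge\varepsilon_2/\sqrt{(1+2\alpha)/(1-4\alpha)}$, which is the claim.

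The main obstacle is the extremal two-wedge analysis of the second step: one must rule out that some admissible rotation makes $W_p\cap W_q$ a long thin sliver reaching far from $g$, i.e.\ bound how far the intersection can extend along each of the two near-orthogonal directions, and then show that in the true worst case the two diagonals are balanced up to the stated factor. A smaller but easy-to-miss point is in the lower bound: the camera that most tightly constrains $d$ is the one at the offset $c^\star$, not the one directly over $g$, so an argument that only checks the overhead camera would overestimate the admissible length of $d$ and yield a false (too strong) bound.
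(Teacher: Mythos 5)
Your plan matches the paper's proof essentially step for step: pick the symmetric pair viewing $g$ at (roughly) $90^\circ$, upper-bound $\varepsilon_2$ by showing the two diagonals of the worst-case intersection quadrilateral differ by at most $\sqrt{\frac{1+2\alpha}{1-4\alpha}}$ (the paper's Lemma~\ref{lem:upperbound2camerasValue2d}), and lower-bound $\varepsilon_\infty$ by showing every other camera on the viewing line can rotate its wedge to contain $g$ together with the vertical (shorter) diagonal, with the binding camera being the one at the oblique $45^\circ$ offset rather than the one overhead (the paper's Lemma~\ref{lem:diag1max}). The only cosmetic deviations are the exact camera spacing ($t=2h/\tan(\pi/4-\alpha)$ in the paper versus $2h$ in yours) and your use of wedge aperture $\alpha$ where the paper's cones have full aperture $2\alpha$, which would perturb the constants but not the structure of the argument.
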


%In reality, the wedge angle $\alpha$ is created by pixel resolution and calibration error. Both error are normally less than $1$ pixel (sub-pixel error). Therefore, the maixmum angular error for a camera with $1080 \times 1920$ and field of view $70^\circ \times 120^\circ$ respectively. If we assume the pixel error is upper bounded by $10$ pixels, then $\alpha \leq \max(10/1920*120^\circ *\pi/180,10/1080*70^\circ *\pi/180) = 0.0113$ rad. Therefore, in the following text, we will use small angle approximation for angle $\alpha$, where $\sin(\alpha) \approx \alpha$ and $\cos(\alpha) \approx 1$.

%We will prove Theorem~\ref{thrm:opt2cameras2dbound} in the following steps. First, we find two cameras $s_p,s_q \in A$ and derive the closed-form solution for the uncertainty $\varepsilon_2$. 
%Next, we show minimum worst case uncertainty $\varepsilon_\infty$ is lower bounded by $\varepsilon_2$.

\begin{figure}[h]
\centering
	\includegraphics[width=0.5\textwidth]{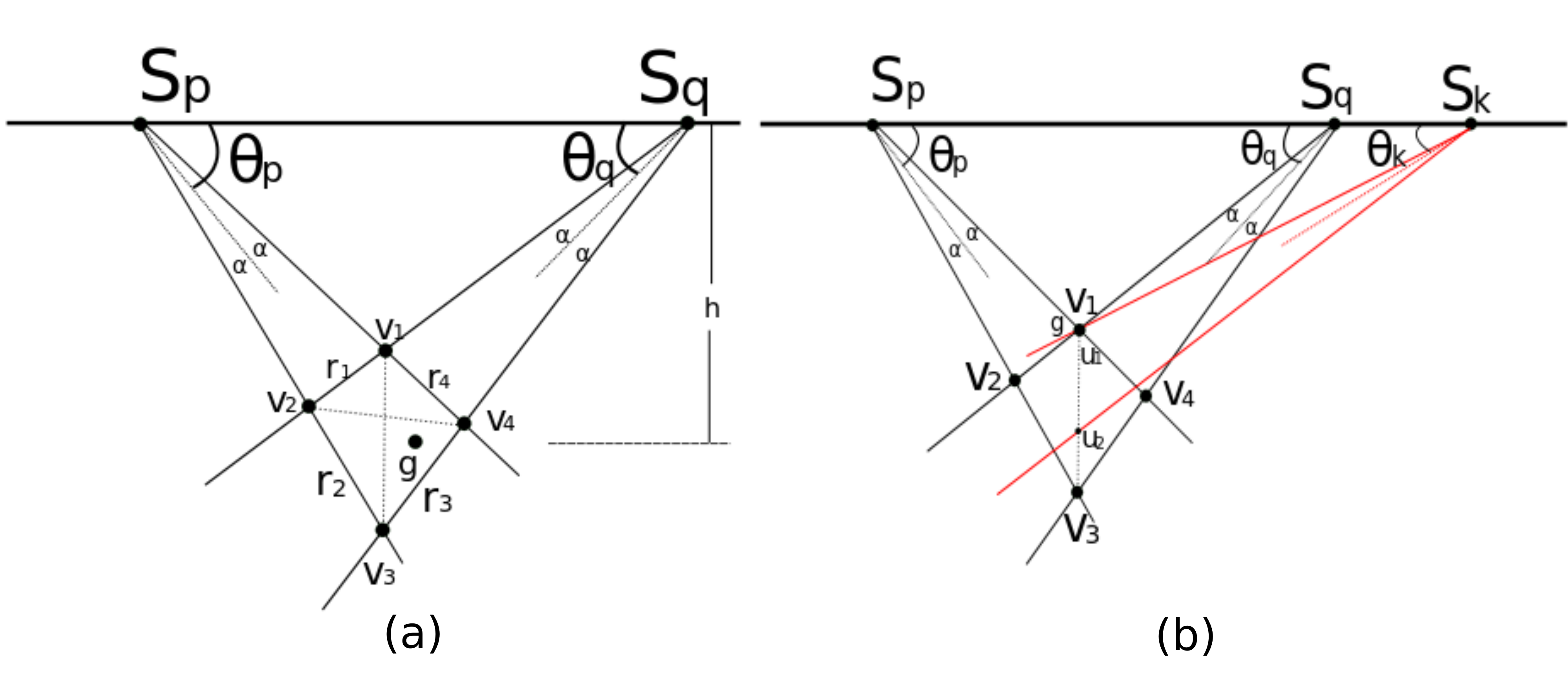}
	\caption{(a) Notation for the two camera selection $s_p$ and $s_q$ (b) If the cone created by $s_k$ that does not contain $diag_1$, we get a contradiction (proof of Lemma~\ref{lem:diag1max})}
	\label{fig:campq}
\end{figure} 

We will prove the theorem directly by providing the two cameras, computing their worst-case uncertainty $\varepsilon_2$ and comparing it with the minimum possible worst-case uncertainty. 
First, we present the notation and the setup used in the computations.
%The following notations for camera positions, orientations and intersection polygon will be used throughout the paper. 
We set a coordinate system whose origin is at the target $g$. The $x$-axis is on $G$ and the $z$-axis points ``up" toward the viewing plane.
The locations of the two cameras are chosen as:
 $s_p = [-t/2,h]$ and $s_q = [t/2,h]$ where $t = \frac{2h}{\tan(\pi/4-\alpha)}$ and the cone orientations $\theta_p, \theta_q$ respectively (Fig~\ref{fig:campq} (a)). We use the angle $\theta$ between the bisector of a wedge with respect to $S$ for orientation. Of the two half-planes whose intersection yields the wedge, the inner half plane is the one that is closer to $S$ -- i.e. the angle measured is smaller while the other half-plane is the outer half-plane also shown in Fig~\ref{fig:campq} (a). Note that $\theta_p,\theta_q \in [\pi/4-2\alpha,\pi/4]$.
%\vtxt{Figure}
 
 %Their orientations are $\theta_p$ and $\theta_q$ respectively such that $g$ is visible from both cameras. 
Their worst case uncertainty is given by
%We will show that their  worst case uncertainty denoted as $\varepsilon_2 = \varepsilon(g,\{s_p,s_q\})$ is: 
\begin{equation}\label{eq:uncertainty2cam2d}
\varepsilon_2 = \max_{\theta_p,\theta_q}||Cone((s_p,\theta_p),g)\cap Cone((s_q,\theta_q),g)||
\end{equation}
Consider the two wedges which give the worst case uncertainty (i.e. $\arg \max$ of $\varepsilon_2$). 
Let $Q_{pq}$ be their intersection with vertices $\{v_1,v_2,v_3,v_4\}$ and  edges  $\{e_1,e_2,e_3,e_4\}$ (Fig~\ref{fig:campq} (a)).
The lengths of the edges are denoted as $r_i = ||e_i||$ and the length of the diagonals are denoted by $diag_1 = ||\overline{v_1v_3}||,diag_2 = ||\overline{v_2v_4}||$.

We now compute these quantities.

\subsubsection{Computing $\varepsilon_2$}
In order to maximize over the orientation, we first establish the closed form solution for the edges and diagonals as functions of $h$,$t$,$\theta_{p,q}$,and $\alpha$.

Using the law of cosines, $diag_1$ can be calculated as 
\begin{equation}\label{eq:diag1}
diag_1^2 = r_1^2+r_2^2
	-2  r_1  r_2  \cos(\theta_p+\theta_q)
\end{equation}
Similarly, the $diag_2$ can be calculated as 
\begin{equation}\label{eq:diag2}
diag_2^2 = r_1^2+r_4^2
	-2 r_1  r_4  \cos(\pi - \theta_p-\theta_q + 2\alpha)
\end{equation}
The detailed derivation is shown in Appendix~\ref{sec:wedgeIntersection}. 

We now consider the vertical diagonal 
whose length $diag_1$ is given in Equation~\ref{eq:diag1}. % is monotone {\color{red} we did not prove this monotone} 
%in the two wedge angles $\theta_p, \theta_q \in [\pi/4-2\alpha, \pi/4]$ . 
It is maximized when $\theta_p = \theta_q = \pi/4$.
Fig~\ref{fig:diag1_maximum} shows $diag_1$ 
as a function of the two wedge angles $\theta_p$ and $\theta_q$ and for $\alpha \leq 0.1$ rad. 
When $\theta_p = \theta_q = \pi/4$, the vertex $v_1 = g$, which means that the inner half-planes of $Cone_p$ and $Cone_q$ intersect at $g$.

\begin{figure}[h]
\centering
	\includegraphics[width=0.3\textwidth]{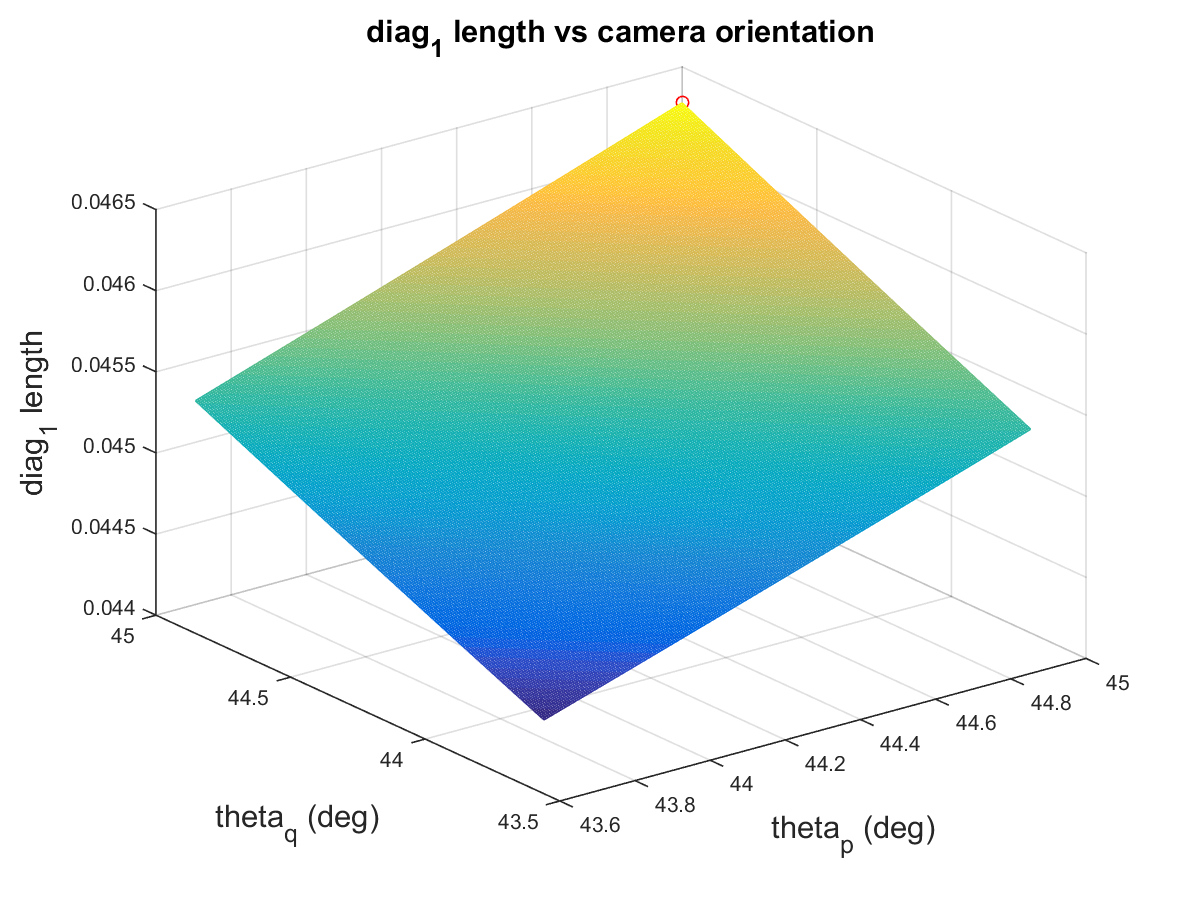}
	\caption{$diag_1$ length as a function of $\theta_p$ and $\theta_q$}
	\label{fig:diag1_maximum}
\end{figure} 

We can therefore set $\theta_p = \theta_q = \pi/4$ and write the equation of $diag_1$ as a function of 
$\alpha$ and $h$: Using the law of sines on the triangle $\triangle(s_qv_1v_3)$ and $\overline{v_1s_q} = h/\sin(\pi/4-\alpha)$, we obtain:
\begin{align*}
\frac{diag_1}{\sin(2\alpha)} &= \frac{\overline{v_1s_q}}{\sin(\frac{\pi}{2}-\theta-\alpha)} \\
diag_1 &= \frac{2h\sin(2\alpha)}{1-\sin(2\alpha)}
\end{align*}

This establishes the maximum length of the diagonal $diag_1 =\frac{2h\sin(2\alpha)}{1-\sin(2\alpha)}$ in the worst case configuration of $\theta_p=\theta_q = \pi/4$. 

We now compare $\varepsilon_2(s_p, s_q) = \max||Q_{pq}||$ with $\varepsilon_\infty$.

\begin{lemma}\label{lem:diag1max}
Consider the two cameras $s_p,s_q$ in the optimal configuration described above and let  $diag_1$
be the intersection of their worst-case uncertainty polygon $Q_{pq}$.
%the intersection polygon $Q_{pq}$ and the maximum length of the diagonal
%=\frac{2h\sin(2\alpha)}{1-\sin(2\alpha)}$ when $\theta_p=\theta_q = \pi/4$. 
Any cone starting from location $s_k \in A-\{s_p,s_q\}$, can be rotated to an angle $\theta_k$ such that both $g$ and 
$diag_1$ are contained in its uncertainty wedge $Cone((s_k,\theta_k),g)$.
% after maximizing over $\theta_k$.
\end{lemma}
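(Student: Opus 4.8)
The plan is to show that every camera location $s_k$ on the viewing line (other than $s_p$ and $s_q$) can be assigned an orientation $\theta_k$ so that its uncertainty wedge $Cone((s_k,\theta_k),g)$, which has opening angle $2\alpha$, contains the entire segment $diag_1$ — in particular both of its endpoints $v_1$ and $v_3$. Since $v_1 = g$ in the optimal configuration $\theta_p = \theta_q = \pi/4$, it suffices to show that the wedge can be made to contain both $g$ and $v_3$. The key geometric observation is that $diag_1$ is the \emph{vertical} diagonal of $Q_{pq}$: it lies along the $z$-axis (the line through $g$ perpendicular to $G$ and $S$), since by symmetry of the construction ($s_p$ and $s_q$ are mirror images across the $z$-axis and carry equal orientations) the intersection polygon is symmetric about the $z$-axis, so $v_1$ and $v_3$ both lie on it. Thus I must show: for any apex $s_k = [x_k, h]$ with $x_k \ne \pm t/2$, the angular width, as seen from $s_k$, of the vertical segment from $g=(0,0)$ to $v_3=(0,\,diag_1)$ is at most $2\alpha$.

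First I would compute that angular width explicitly. Let $\beta_k$ be the angle $\angle(g\, s_k\, v_3)$ subtended at $s_k$ by the segment $\overline{g v_3}$. Using the coordinates, $g$ is seen from $s_k$ along the direction to the origin and $v_3$ along the direction to $(0, diag_1)$; the angle between these two directions is $\beta_k = \arctan\!\big(\tfrac{|x_k|}{h}\big) - \arctan\!\big(\tfrac{|x_k|}{h - diag_1}\big)$ (taking care with signs; note $diag_1 < h$ so $h - diag_1 > 0$). This is an increasing-then-decreasing function — but the relevant claim is simply $\beta_k \le 2\alpha$ for all admissible $x_k$. Plugging in $diag_1 = \frac{2h\sin(2\alpha)}{1-\sin(2\alpha)}$ and maximizing over $x_k \in \mathbb{R}$, I expect the supremum of $\beta_k$ to be attained in the limit and to equal exactly $2\alpha$ (this matches the intuition that the cameras $s_p, s_q$ are the ``extremal'' ones, and any other camera sees $diag_1$ under a strictly smaller angle). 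Once $\beta_k \le 2\alpha$, the wedge of opening $2\alpha$ apexed at $s_k$ can be rotated to contain the whole angular interval spanned by $\overline{g v_3}$, hence contains both $g$ and $v_3$, hence contains the convex segment $diag_1$; and since it contains $g$ it is a legitimate member of $S(g, s_k)$.

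An alternative, more conceptual route avoids the explicit arctangent bound: observe that $v_3$ lies on the inner half-plane boundary of $Cone_p$ and also on that of $Cone_q$ (it is the ``far'' vertex of the lens $Q_{pq}$ along the $z$-axis). The circle through $g$ and $v_3$ on which every apex point sees $\overline{g v_3}$ under exactly angle $2\alpha$ passes through $s_p$ and $s_q$ (this is precisely why those two cameras realize the diagonal as an edge-to-edge chord). Since this circle meets the viewing line $S$ in exactly the two points $s_p$ and $s_q$, every other point of $S$ lies \emph{outside} that circle and therefore sees $\overline{g v_3}$ under an angle strictly less than $2\alpha$ — the inscribed-angle theorem. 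This is the slick argument and I would try to make it rigorous; the illustration in Fig~\ref{fig:campq}(b) suggests the authors argue by contradiction in exactly this spirit (a cone not containing $diag_1$ forces the apex inside the circle, contradicting that it lies on $S$).

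The main obstacle I anticipate is handling the \emph{orientation constraint} carefully: the wedge at $s_k$ must not only geometrically cover $diag_1$ but must do so with an admissible orientation $\theta_k$, and the set $S(g,s_k)$ only includes wedges that actually contain $g$. Since $g = v_1$ is an endpoint of $diag_1$, any wedge covering the segment automatically covers $g$, so this is fine — but I would need to double-check the boundary/limiting cases (cameras very far away, where $\beta_k \to 0$, and the degenerate approach $x_k \to \pm t/2$, where $\beta_k \to 2\alpha$) to confirm the inequality is never violated and that the claimed orientation $\theta_k$ stays within the physically meaningful range. The inscribed-angle argument sidesteps most of the messy trigonometry, so I would lead with that and relegate the coordinate computation to an appendix as a cross-check.
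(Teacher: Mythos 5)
Your high-level strategy --- show that from any other viewpoint $s_k$ on the viewing line the segment $diag_1$ subtends an angle at most $2\alpha$, then rotate the wedge to cover it (and note $g=v_1$ is an endpoint, so containment of $g$ is automatic) --- is essentially the same fact the paper establishes, just phrased via subtended angles instead of the paper's law-of-sines computation (the paper anchors the wedge's inner half-plane at $g$ and shows its intersection with the vertical line through $g$ has length $\frac{2h\sin 2\alpha}{\sin 2\theta_k-\sin 2\alpha}\ge diag_1$, with equality only at $\theta_k=\pi/4$, i.e.\ only at $s_p,s_q$). However, your write-up has a genuine geometric error that breaks the key step as you set it up: you place $v_3$ \emph{above} $g$, at $(0,\,diag_1)$ between the ground and the viewing plane (and you even use $h-diag_1>0$). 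In the worst-case configuration $\theta_p=\theta_q=\pi/4$ the inner half-planes cross \emph{at} $g$, so $Q_{pq}$ hangs on the far side of the ground line: $v_1=g$ is the top vertex and $v_3=(0,\,-diag_1)$ lies a distance $diag_1$ \emph{below} $\mathcal{G}$, where the two outer half-plane boundaries meet. This is not a sign convention issue. With your placement the claim ``$\beta_k\le 2\alpha$ for all $x_k$, with the supremum $2\alpha$ attained at $s_p,s_q$'' is false: maximizing $\arctan\bigl(x/(h-d)\bigr)-\arctan\bigl(x/h\bigr)$ over $x$ gives $x^*=\sqrt{h(h-d)}\neq t/2$, and e.g.\ for $\alpha=0.1$ rad ($d\approx 0.50h$) the maximum is about $0.34$ rad $>2\alpha=0.2$; even at $s_q$ your segment does not lie between the two wedge boundaries, so it is not seen at exactly $2\alpha$ there, and consequently the ``circle through $g,v_3,s_p,s_q$'' you invoke does not exist, so the inscribed-angle shortcut collapses as stated.

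With the corrected location of $v_3$ everything you propose does go through and becomes equivalent to the paper's argument: the subtended angle $\arctan(x/h)-\arctan\bigl(x/(h+d)\bigr)$ is maximized at $x^*=\sqrt{h(h+d)}$, and with $d=\frac{2h\sin 2\alpha}{1-\sin 2\alpha}$ one checks $\sqrt{h(h+d)}=t/2$ exactly, so the viewing line is tangent to the $2\alpha$-isoptic arc at $s_q$ (and, by mirror symmetry, at $s_p$ on the reflected arc --- note these are two circles, one per side, not one circle through both cameras), and every other point of $S$ sees $\overline{gv_3}$ at an angle strictly less than $2\alpha$. Be aware that you left precisely this quantitative verification as something you ``expect''; it is the entire content of the lemma, and as your geometry stands it would have failed the check rather than confirmed it.
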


%\begin{figure}[h]
%\centering
%	\includegraphics[width=0.3\textwidth]{}
%	\caption{If the cone created by $s_k$ that does not contain $diag_1$, we get a contradiction (proof of Lemma~\ref{lem:diag1max}).}
%	\label{fig:diag1max}
%\end{figure} 

Now that we established that two cameras suffice, we compute the uncertainty value:
\begin{lemma}\label{lem:upperbound2camerasValue2d}
Given the two cameras $s_p,s_q$, the intersection polygon $Q_{pq}$, the maximum length of the diagonal $diag_1 =\frac{2h\sin(2\alpha)}{1-\sin(2\alpha)}$ when $\theta_p=\theta_q = \pi/4$, and the worst case uncertainty $\varepsilon_2 = \max||Q_{pq}||$.
%We claim that 
\begin{equation}\label{eq:upperbound2camerasValue2d}
\varepsilon_2 \leq \sqrt{\frac{1+2\alpha}{1-4\alpha}} \cdot \frac{2h\sin(2\alpha)}{1-\sin(2\alpha)}
\end{equation}
\end{lemma}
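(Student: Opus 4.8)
The plan is to recognise that the worst-case uncertainty $\varepsilon_2=\max_{\theta_p,\theta_q}||Q_{pq}||$ (Equation~\ref{eq:uncertainty2cam2d}) is, for this convex quadrilateral, controlled entirely by its two diagonals, and then to bound each diagonal in terms of the length $diag_1^{\max}:=\frac{2h\sin(2\alpha)}{1-\sin(2\alpha)}$ already computed above. First I would observe that since $Q_{pq}$ is a convex quadrilateral its diameter is attained at a pair of vertices, and a direct check using the explicit vertex coordinates (Appendix~\ref{sec:wedgeIntersection}) shows that no edge $r_i$ exceeds $\max(diag_1,diag_2)$; hence $||Q_{pq}||=\max(diag_1,diag_2)$ and, using $\max_{\theta}\max(a,b)=\max(\max_\theta a,\max_\theta b)$ together with the already established identity $\max_{\theta_p,\theta_q}diag_1=diag_1^{\max}$, it suffices to prove
\[
\max_{\theta_p,\theta_q\in[\pi/4-2\alpha,\,\pi/4]}diag_2\;\le\;\sqrt{\frac{1+2\alpha}{1-4\alpha}}\;diag_1^{\max};
\]
indeed the factor $\sqrt{(1+2\alpha)/(1-4\alpha)}$ exceeds $1$ for $0\le\alpha<1/4$, so $\varepsilon_2=\max\bigl(diag_1^{\max},\max_{\theta}diag_2\bigr)$ is then at most the right-hand side of Equation~\ref{eq:upperbound2camerasValue2d}.

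To bound $diag_2$ I would start from Equation~\ref{eq:diag2} and substitute the closed forms for the edge lengths $r_1$ and $r_4$ derived in Appendix~\ref{sec:wedgeIntersection} (obtained via the law of sines in the small triangles cut off at $v_1$ and $v_4$), which turns $diag_2$ into an explicit function of $\theta_p,\theta_q,\alpha,h$. I then maximise this function over the box $[\pi/4-2\alpha,\pi/4]^2$. I expect the maximum to sit at the symmetric corner $\theta_p=\theta_q=\pi/4$ -- the same configuration that maximises $diag_1$ and forces $v_1=g$ -- which can be pinned down either by a partial-derivative sign check or by a symmetrisation argument ($diag_2$ is symmetric in $\theta_p$ and $\theta_q$, and for a fixed angle sum the edge-length factors push the optimum to equal angles). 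At $\theta_p=\theta_q=\pi/4$ one has $\theta_p+\theta_q=\pi/2$, so Equation~\ref{eq:diag1} gives $r_1=r_2=diag_1^{\max}/\sqrt{2}$ and Equation~\ref{eq:diag2} reduces to $diag_2^2=r_1^2+r_4^2+2r_1r_4\sin(2\alpha)$ with $r_4$ likewise in closed form; the ratio $diag_2/diag_1^{\max}$ then collapses to a one-variable trigonometric expression in $\alpha$.

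The final step is to verify that this ratio is at most $\sqrt{\frac{1+2\alpha}{1-4\alpha}}$ on $0\le\alpha<1/4$, using elementary monotone estimates for the trigonometric functions on that range -- in particular $\sin(2\alpha)\le 2\alpha$ together with a matching lower bound, plus bounds on $\cos(2\alpha)$ and on $1-\sin(2\alpha)$ -- to clear all the trigonometry and reduce to a polynomial inequality in $\alpha$. Chaining the two diagonal bounds through $\varepsilon_2=\max\bigl(diag_1^{\max},\max_{\theta}diag_2\bigr)$ then yields Equation~\ref{eq:upperbound2camerasValue2d}.

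The main obstacle is the two-variable maximisation of $diag_2$: one must show rigorously that the worst case is the symmetric corner $\theta_p=\theta_q=\pi/4$ rather than a point on the boundary of the box where $diag_1$ is smaller but the term $-2r_1r_4\cos(\pi-\theta_p-\theta_q+2\alpha)$ of Equation~\ref{eq:diag2} is larger, which requires controlling how $r_1$ and $r_4$ trade off against that cosine; and then one must squeeze the resulting trigonometric ratio into the clean closed form $\sqrt{(1+2\alpha)/(1-4\alpha)}$, tracking the direction of every inequality so that the bound stays valid for all $\alpha<1/4$.
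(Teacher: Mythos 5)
Your overall reduction is sound and close in spirit to what the paper does: take $\|Q_{pq}\|=\max(diag_1,diag_2)$, use $\max_\theta\max(a,b)=\max(\max_\theta a,\max_\theta b)$ together with $\max_\theta diag_1=\frac{2h\sin 2\alpha}{1-\sin 2\alpha}$, and then bound the worst-case $diag_2$ against that quantity. (The paper instead works with small-angle approximations, showing $\max diag_2\lesssim\sqrt{1+2\alpha}\cdot 2t\alpha$ and $\min diag_1\gtrsim\sqrt{1-4\alpha}\cdot 2t\alpha$ and taking the ratio; your exact treatment is, in principle, cleaner.) However, there is a genuine gap at the central step: the maximizer of $diag_2$ is \emph{not} the symmetric corner $\theta_p=\theta_q=\pi/4$. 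Substituting the Appendix closed forms $r_1=K\sin(\theta_q-\alpha)$, $r_4=K\sin(\theta_p-\alpha)$ with $K=\frac{t\sin 2\alpha}{\sin(\theta_p+\theta_q-2\alpha)\sin(\theta_p+\theta_q)}$ into Equation~\ref{eq:diag2} and using the identity $\sin^2 a+\sin^2 b+2\sin a\sin b\cos(a+b)=\sin^2(a+b)$ gives the exact formula $diag_2=\frac{t\sin 2\alpha}{\sin(\theta_p+\theta_q)}$, which is strictly decreasing in $\theta_p+\theta_q$ on the admissible box. Hence $\theta_p=\theta_q=\pi/4$ is the \emph{minimizer} of $diag_2$, and the partial-derivative check you propose would refute, not confirm, your claimed worst case; the maximum sits at $\theta_p=\theta_q=\pi/4-2\alpha$, with value $\frac{t\sin 2\alpha}{\cos 4\alpha}=\frac{\cos 2\alpha}{\cos 4\alpha}\cdot\frac{2h\sin 2\alpha}{1-\sin 2\alpha}$. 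As written, your one-variable verification at $\pi/4$ bounds the wrong (smallest) configuration and leaves $\varepsilon_2$ uncontrolled. (Your incidental claim $r_1=r_2=diag_1^{\max}/\sqrt{2}$ at the symmetric corner is also off: there $r_1=r_4$ and $r_2=r_3$, but $r_1\neq r_2$.)

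The plan is repairable: with the correct maximizer, the needed inequality becomes $\frac{\cos 2\alpha}{\cos 4\alpha}\le\sqrt{\frac{1+2\alpha}{1-4\alpha}}$ for $0\le\alpha<1/4$, which does hold (and in fact yields the exact value $\varepsilon_2=\frac{\cos 2\alpha}{\cos 4\alpha}\cdot\frac{2h\sin 2\alpha}{1-\sin 2\alpha}$, sharper than the stated bound). You would also still need the promised direct check that no edge $r_i$ exceeds the longer diagonal in this particular geometry, since for general convex quadrilaterals an edge can dominate both diagonals; here $r_1,r_4\le diag_2$ follows immediately from the closed forms, and a similar check handles $r_2,r_3$. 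With those two repairs your argument goes through and is arguably tighter than the paper's approximation-based proof, but in its current form the key maximization step fails.
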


% \begin{proof}
% We now look at the convex polygon $Q_{pq}$ constructed by cameras $s_p$ and $s_q$ and target $g$. 
% We get $\frac{diag_2}{diag_1} \leq \sqrt{\frac{1+2\alpha}{1-4\alpha}}$ (details will be given  in Appendix~\ref{append:B} of the supplementary material), 

% Therefore ,
% \begin{align*}
% \varepsilon_2 &\leq \frac{diag_2}{diag_1} \cdot diag_1 \\
% 				&\leq \sqrt{\frac{1+2\alpha}{1-4\alpha}} \cdot \frac{2h\sin(2\alpha)}{1-\sin(2\alpha)} \\
% %				&\leq \sqrt{\frac{1+2\alpha}{1-4\alpha}} \cdot \frac{4h\alpha}{1-2\alpha} \\
% \end{align*}
% \end{proof}

Now we can conclude by presenting the proof of Theorem~\ref{thrm:opt2cameras2dbound}.

\begin{proof}
Combining Lemma~\ref{lem:diag1max} and Lemma~\ref{lem:upperbound2camerasValue2d}, we can conclude that $diag_1 \leq \varepsilon_\infty \leq diag_2$. Therefore, $\varepsilon_2 \leq \sqrt{\frac{1+2\alpha}{1-4\alpha}} \cdot \varepsilon_\infty$ 
\end{proof}

In this section, we showed that there exist two cameras $s_p$ and $s_q$ with orientation $\theta_p = \theta_q = \pi/4$ such that their worst case uncertainty $\varepsilon_2 \leq \sqrt{\frac{1+2\alpha}{1-4\alpha}} \cdot \varepsilon_\infty$.
We will call the pair of cameras $s_p,s_q$ as the \textbf{optimal pair} for the rest of the paper and this configuration as the \textbf{optimal configuration} of $\{s_p, s_q\}$.

%\begin{theorem}\label{thrm:opt2cameras2dbound}
%Given a target $g \in G$ and a set of cameras $s \in S$, where the distance between $G$ and $S$ is $h$ and the number of cameras in $S$ is unbounded, we claim that there exits two cameras $s_p$ and $s_q$ that
%\begin{equation}\label{eq:opt2cameras2dbound}
%\varepsilon_2 \leq \sqrt{\frac{1+2\alpha}{1-4\alpha}}\varepsilon_\infty
%\end{equation}
%where the minimum worst case uncertainty is $\varepsilon_\infty = \varepsilon(g,S)$ and worst case uncertainty from two cameras $s_p$ and $s_q$ is $\varepsilon_2$.
%\end{theorem}
%{\color{red} Might want to use this theorem instead}

%\vtxt{reviewed up to here}

\subsection{The Solution of Problem 1 in 3D}
%\subsection{Three Dimensions}
The results of the previous section readily extend to  $\varepsilon_\infty$ in 3-D.  

\begin{theorem}\label{thrm:opt2cameras3dbound}
Given a target $g \in \mathcal{G}$ and a set of cameras $s \in \mathcal{S}$, where the distance between $\mathcal{G}$ and $\mathcal{S}$ is $h$ and the number of cameras in $\mathcal{S}$ is unbounded, we claim that the optimal pair $s_p$ and $s_q$ gives
\begin{equation}\label{eq:opt2cameras3dbound}
\varepsilon_2 \leq \sqrt{\frac{1+2\alpha}{1-4\alpha}} \cdot \varepsilon_\infty
\end{equation}
where the minimum worst case uncertainty in 3-D is $\varepsilon_\infty = \varepsilon(g,\mathcal{S})$ and worst case uncertainty from two cameras $s_p$ and $s_q$ is $\varepsilon_2$.
\end{theorem}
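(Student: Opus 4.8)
The plan is to reduce Theorem~\ref{thrm:opt2cameras3dbound} to the planar Theorem~\ref{thrm:opt2cameras2dbound} by restricting attention to the vertical plane $\Pi$ through $g$, $s_p$, and $s_q$. I would keep the coordinates of the 2D proof, with $g$ at the origin, $\mathcal{G} = \{z=0\}$, $\mathcal{S} = \{z=h\}$, and $s_p = [-t/2,0,h]$, $s_q = [t/2,0,h]$ with $t = \frac{2h}{\tan(\pi/4-\alpha)}$, so that $\Pi = \{y=0\}$ contains all three points. As in 2D, it then suffices to prove a lower bound $\varepsilon_\infty \geq diag_1$ and an upper bound $\varepsilon_2 \leq \sqrt{\frac{1+2\alpha}{1-4\alpha}}\, diag_1$, since chaining them gives $\varepsilon_2 \leq \sqrt{\frac{1+2\alpha}{1-4\alpha}}\, diag_1 \leq \sqrt{\frac{1+2\alpha}{1-4\alpha}}\, \varepsilon_\infty$. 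Here $diag_1 = \frac{2h\sin(2\alpha)}{1-\sin(2\alpha)}$ retains its 2D value because the two optimal cones at $\theta_p=\theta_q=\pi/4$ have their axes in $\Pi$, so the cross-section of their intersection by $\Pi$ is exactly the 2D polygon $Q_{pq}$.

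For the lower bound I would extend Lemma~\ref{lem:diag1max} to all cameras of $\mathcal{S}$. With $s_p, s_q$ in the optimal configuration, the intersection of their solid cones contains $Q_{pq}$, hence the segment $\overline{v_1 v_3}$ with $v_1 = g$ and $||\overline{v_1 v_3}|| = diag_1$. For any other camera $s_k \in \mathcal{S}$, its solid right circular cone is convex, so it contains $\overline{v_1 v_3}$ as soon as it contains the endpoints $v_1$ and $v_3$, which is possible precisely when $\angle v_1 s_k v_3 \leq 2\alpha$. Orthogonally projecting $s_k$ onto $\Pi$ yields a camera $s_k'$ on the line $\mathcal{S}\cap\Pi$ whose subtended angle on the in-plane segment $\overline{v_1 v_3}$ is at least as large (every camera of $\mathcal{S}$ lies at distance at least $h$ from this comparatively short segment, so the out-of-plane displacement does not reduce the subtended angle), and Lemma~\ref{lem:diag1max} bounds that 2D angle by $2\alpha$. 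Hence every camera of $\mathcal{S}$ can be rotated so its cone still contains $\overline{v_1 v_3}$, the worst-case intersection over all of $\mathcal{S}$ still contains this segment, and $\varepsilon_\infty \geq diag_1$.

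For the upper bound I would show the worst-case intersection of the optimal pair is attained with both cone axes in $\Pi$. Since $\{g, s_p, s_q\}$ is invariant under the reflection $y \mapsto -y$ and the intersection diameter is too, the diameter-maximizing pair of cones may be taken symmetric, i.e. with both axes in $\Pi$; then $C_p \cap \Pi$ and $C_q \cap \Pi$ are exactly the 2D wedges of Theorem~\ref{thrm:opt2cameras2dbound}, so $\varepsilon_2 = \max ||Q_{pq}|| = diag_2$, which Lemma~\ref{lem:upperbound2camerasValue2d} bounds by $\sqrt{\frac{1+2\alpha}{1-4\alpha}}\, diag_1$. The main obstacle is making this planarity claim rigorous: the reflection argument shows the maximizer set is reflection-invariant but not a priori that it contains a symmetric element, so one must instead argue directly that tilting either cone's axis out of $\Pi$, while keeping $g$ inside, cannot strictly increase the diameter of $C_p \cap C_q$. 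Granting this, $\varepsilon_2 = diag_2 \leq \sqrt{\frac{1+2\alpha}{1-4\alpha}}\, diag_1 \leq \sqrt{\frac{1+2\alpha}{1-4\alpha}}\, \varepsilon_\infty$, which is the claim.
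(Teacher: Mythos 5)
Your overall plan (reduce to the plane $\Pi$ through $g,s_p,s_q$, prove $\varepsilon_\infty \geq diag_1$ by showing every camera's cone can be rotated to contain the vertical diagonal, and reuse the 2D pair bound for $\varepsilon_2$) is in the spirit of the paper's reduction, but the lower-bound step as you argue it is wrong. The claim that orthogonally projecting $s_k$ onto $\Pi$ cannot decrease the angle subtended by $\overline{v_1v_3}$ is false: take $s_k=(0,y,h)$ with $y\neq 0$; its projection is the point directly above $g$, which is collinear with the vertical diagonal, so the subtended angle at the projected camera is exactly $0$, while at $s_k$ it is strictly positive. So projection can strictly shrink the subtended angle, and ``distance at least $h$ from a short segment'' does not rescue the monotonicity. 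The conclusion you want is still true, but the correct reduction is rotational symmetry about the vertical line through $g$: the angle subtended by the vertical diagonal depends only on the horizontal distance of the camera from that axis, so rotating $s_k$ about the axis into $\Pi$ preserves the angle exactly, and then Lemma~\ref{lem:diag1max} applies verbatim. Be careful also about the orientation of the diagonal: with $\theta_p=\theta_q=\pi/4$ the \emph{inner} half-planes meet at $g$, so $\overline{v_1v_3}$ extends from $g$ \emph{away} from $\mathcal{S}$; for a segment of length $diag_1$ extending toward $\mathcal{S}$ the subtended angle can exceed $2\alpha$ for some cameras, and the containment argument would fail.

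On the upper bound, you candidly leave unproven the assertion that tilting a cone axis out of $\Pi$ (while keeping $g$ inside) cannot increase the diameter of $C_p\cap C_q$; the reflection-invariance remark, as you note, does not deliver a symmetric maximizer. But that assertion is essentially the entire content of passing from 2D to 3D for $\varepsilon_2$, so as written the proof is incomplete rather than merely rough at the edges. The paper's own (admittedly terse) argument sidesteps the need to identify the worst case as planar: it bounds the 3D uncertainty region by the diagonal of a perpendicular cross section of the cones, which lets Theorem~\ref{thrm:opt2cameras2dbound} and Lemma~\ref{lem:upperbound2camerasValue2d} be applied directly. To complete your route you would need either such a cross-section/containment bound on the 3D intersection or an explicit monotonicity argument showing out-of-plane tilts do not enlarge the diameter.
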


\begin{figure}[h]
\centering
	\includegraphics[width=0.3\textwidth]{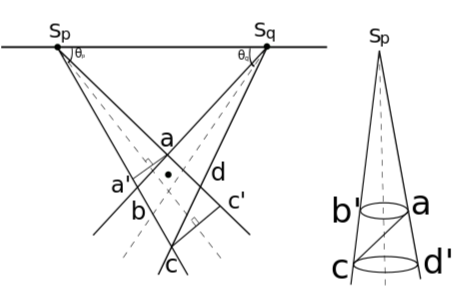}
	\caption{Uncertainty in 3D given by two intersecting cones }
	\label{fig:u3d2d}
\end{figure} 

To prove the theorem, all we have to do is to observe that the diagonal of a perpendicular cross section of the cone bounds the uncertainty in 3D as well. See Fig~\ref{fig:u3d2d}. Therefore, we can apply Theorem~\ref{thrm:opt2cameras2dbound}.

% \begin{proof}
% From Theorem~\ref{thrm:opt2cameras2dbound}, we know that the relationship holds for 2-D. Since the planer case is created by cutting a perpendicular plane w.r.t. $\mathcal{G}$ and $\mathcal{S}$, we can claim that the Theorem~\ref{thrm:opt2cameras2dbound} is valid in the cutting plane with arbitrary orientation as long as it intersects $g$. 
% In 3D case, we can show that in the truncated cone $ab'cd'$ as shown in Fig~\ref{fig:u3d2d}, the maximum uncertainty does not surpass the diagonals. Therefore, we can use the same bound from Theorem~\ref{thrm:opt2cameras2dbound}, thus conclude the proof.
% \end{proof}

\section{Sensor Selection For the Entire Scene}
In the previous section, we established that for a world point $g$, the optimal pair of cameras can produce a reconstruction with approximation ratio less than $\sqrt{\frac{1+2\alpha}{1-4\alpha}}$ of the optimal reconstruction (Theorem~\ref{thrm:opt2cameras3dbound}).
However, if we use the dedicated pair directly for every scene point, we may end up choosing two cameras for each scene point, which in turn might result in a large number of cameras.
 \begin{figure}[h]
 \centering
 	\includegraphics[width=0.4\textwidth]{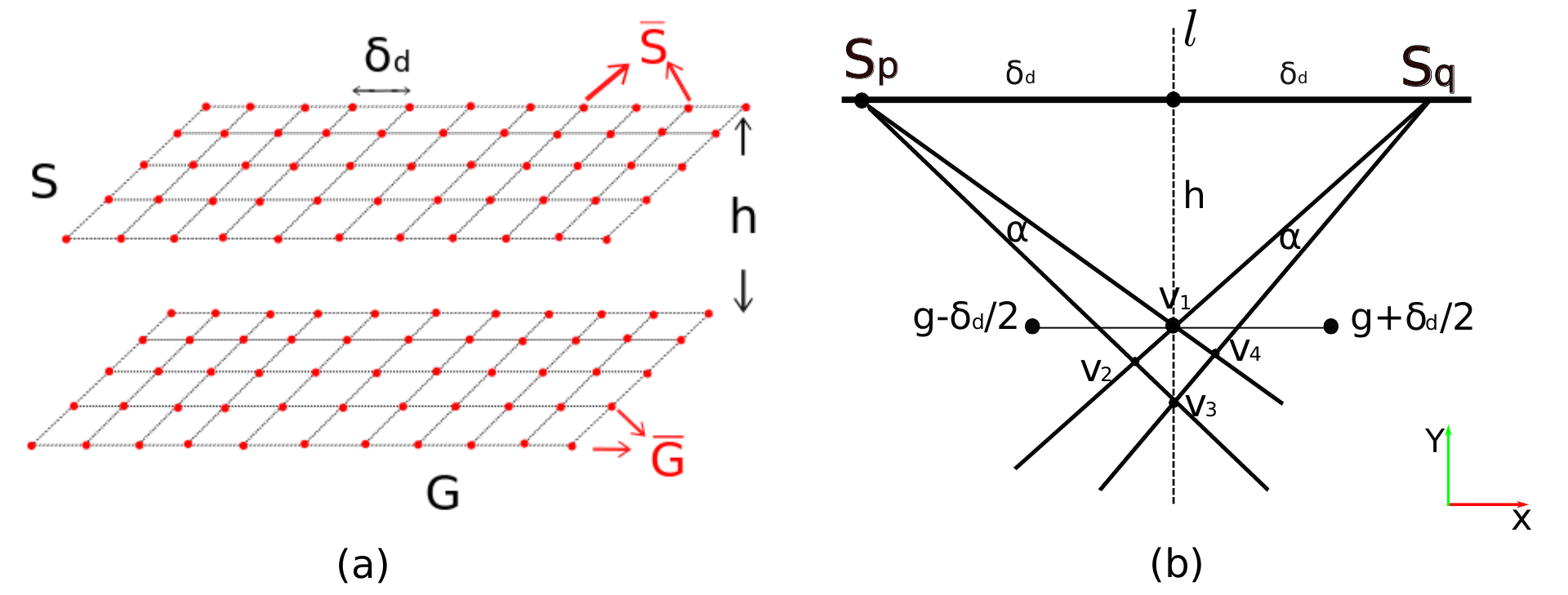}
 	\caption{(a) The square sensor grid in 3D (b) Square sensor grid in 2D with ground point variation.}
 	\label{fig:grid}
 \end{figure}
 
In this section, we show that a coarse grid of cameras provide a good reconstruction for every scene point.
Recall that $\mathcal{G}$ is the ground plane, $\mathcal{S}$ is the view plane,   $\mathcal{G}$ is parallel to  $\mathcal{S}$ and the distance between them is $h$. 
Let $\mathcal{\overline{S}}$ be a square grid imposed on $\mathcal{S}$ with resolution $\delta_d$ (Fig~\ref{fig:grid} (a)). The same grid $\mathcal{\overline{G}}$ is also imposed on the ground plane $\mathcal{G}$. 
To demonstrate the main strategy at a high-level, consider a ground point $g \in \mathcal{G}$, such that the optimal pair of cameras lies in camera grid $\mathcal{\overline{S}}$. We will show that the optimal pair of cameras can still provide ``good" reconstruction for all points in a region $R(g)$ around $g$. 

Using the result we will show in Theorem~\ref{thrm:grid3d2} that a constant number of cameras for a ground plane can be used to achieve a small approximation ratio.

%In this section, we make this rough argument concrete. Specifically, we show that 
%with the choice of $\delta_d = h$, we achieve a \ctxt{$2$} approximation ratio for uncertainty everywhere using 
%$|\mathcal{\overline{S}}| = \lceil \frac{9Area(\mathcal{G})}{h^2} \rceil$ cameras.
\subsection{Problem 2 in 2D}
For cameras in the grid $s \in \overline{S}$ and target $g \in G$, we define the grid uncertainty $\overline{\varepsilon}(g)$ using only the best two cameras in grid $\overline{S}$ as the following
$$
\overline{\varepsilon}(g) = \min_{s_i, s_j \in \overline{S}}\varepsilon(g,\{s_i,s_j\}) 
$$
As mentioned earlier, we will choose the grid resolution to be $\delta_d = h$ for the following analysis.

Now, we define the geometry for Lemmas~\ref{lem:grid2d1}, \ref{lem:grid2d2}, ~\ref{lem:grid2dhori}, and \ref{lem:grid2dvert}. 
Let $g \in \overline{G}$ be a grid location with height $h$ to the viewing plane $S$. Now, we choose the optimal pair of cameras for the target $g$ as $\{s_p,s_q\} \in \overline{S}$ as shown in Fig~\ref{fig:grid} (b). Let $l$ be a line passing through $g$ with $l \perp G$ and $x = l \cap Cone(s)$, where $x$ is the intersection line segments between $l$ and the Cone generated by sensor $s$ and target $g$. 

In order to bound the uncertainty of any target $\forall g \in G$ using the camera grid $\overline{S}$, we need to explore the uncertainty of the targets in grid cells (Fig~\ref{fig:grid} (b)). Therefore, we fix a grid point and define a range of targets $R(g) = [g-\delta_d/2, g+\delta_d/2]$ such that $R(g)$ is generated by moving $g \in \overline{G}$ along the $x$-axis of the grid. We now show that the worst case uncertainty is achieved at the end points of this interval (i.e. the midpoint of two grid locations) bound by $max(||x_p||, ||x_q||)$, where $||x||$ represents the length of line segment of $x$. We define $diag_1 = \overline{ac}$ and $diag_2 = \overline{bd}$ in Fig~\ref{fig:u3d2d}.
%We first define the geometry for the following properties.
%Let $g \in \overline{G}$ be a grid location and $R(g) = [g-\delta_d/2, g+\delta_d/2]$ be a range of points with the optimal pair of cameras $\{s_p,s_q\}$ and orientation $\theta_p,\theta_q$ respectively. The grid resolution is defined as $\delta_d = h$. The intersection between $Cone_i, Cone_j$ is defined as polygon $Q_{ij}$ with diagonal $diag_1,diag_2$. Let $x_i,x_j$ be the distance of the intersection between the vertical line passing through target $g$ and both half-line of the viewing wedge $s_i,x_j$ of target $g$ respectively. 

\begin{lemma}\label{lem:grid2d1}
When $\theta_p + \theta_q \geq \frac{\pi}{2} + \alpha$, $diag_1 > diag_2$.  
\end{lemma}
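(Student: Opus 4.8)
The plan is to work directly with the closed-form expressions for $diag_1$ and $diag_2$ already recorded in Equations~\ref{eq:diag1} and~\ref{eq:diag2}, and show that the inequality $diag_1 > diag_2$ is equivalent to a condition on the cosine terms that is controlled by the hypothesis $\theta_p + \theta_q \geq \tfrac{\pi}{2} + \alpha$. Writing $\sigma = \theta_p + \theta_q$ for brevity, we have
\[
diag_1^2 = r_1^2 + r_2^2 - 2 r_1 r_2 \cos\sigma, \qquad
diag_2^2 = r_1^2 + r_4^2 - 2 r_1 r_4 \cos(\pi - \sigma + 2\alpha).
\]
Note $\cos(\pi - \sigma + 2\alpha) = -\cos(\sigma - 2\alpha)$, so the $diag_2$ expression has a $+2 r_1 r_4 \cos(\sigma - 2\alpha)$ term. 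The hypothesis $\sigma \geq \pi/2 + \alpha$ forces $\cos\sigma \leq \cos(\pi/2+\alpha) = -\sin\alpha < 0$, so the $diag_1$ formula also picks up a positive contribution $-2r_1 r_2 \cos\sigma = 2 r_1 r_2 |\cos\sigma|$. The idea is that once $\sigma$ crosses $\pi/2 + \alpha$, the vertex $v_1$ (equivalently $a$) has moved to the far side of $g$, making $diag_1$ the ``long'' diagonal spanning across $g$, while $diag_2$ stays short.

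The steps I would carry out, in order: (1)~From the wedge geometry (the triangles $\triangle(s_p v_1 v_2)$, etc., as set up in Appendix~\ref{sec:wedgeIntersection}), express each edge length $r_1, r_2, r_4$ via the law of sines in terms of $h$, $t$, $\theta_p$, $\theta_q$, $\alpha$; in particular $r_1, r_2$ are the edges meeting at $v_1$ and $r_1, r_4$ are the edges meeting at $v_2$. (2)~Substitute into the two $diag$ formulas and form the difference $diag_1^2 - diag_2^2$. (3)~Use the angle-sum identities to rewrite $\cos\sigma$ and $\cos(\sigma - 2\alpha)$ and collect terms; the goal is to factor $diag_1^2 - diag_2^2$ into a product where one factor is manifestly positive under $\sigma \geq \pi/2 + \alpha$ and $0 \le \alpha < 1/4$. (4)~Conclude $diag_1^2 > diag_2^2$, hence $diag_1 > diag_2$ since both are lengths. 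A useful sanity check along the way is the boundary case $\sigma = \pi/2 + \alpha$ (and the symmetric optimal case $\theta_p = \theta_q = \pi/4$ treated in the main text), where the sign of the difference should be nonnegative, matching the claim.

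The main obstacle I anticipate is step~(3): the edge lengths $r_i$ depend on $\theta_p$ and $\theta_q$ separately (not only through $\sigma$), because the wedge apexes $s_p$ and $s_q$ are at fixed, non-symmetric positions $[-t/2, h]$ and $[t/2,h]$ relative to $g$, so the difference $diag_1^2 - diag_2^2$ will not collapse to a one-variable expression automatically. I expect to need the constraint $\theta_p, \theta_q \in [\pi/4 - 2\alpha, \pi/4]$ together with the smallness of $\alpha$ to bound cross terms, possibly arguing monotonicity in each of $\theta_p, \theta_q$ separately and reducing to the extreme configurations. An alternative, cleaner route worth trying first is purely synthetic: show that $\sigma \geq \pi/2 + \alpha$ places $v_1$ on the opposite side of line $G$ from the viewing plane relative to where it sits when $\sigma < \pi/2+\alpha$ — i.e. the inner half-planes of the two cones cross below $g$ — so that $\overline{v_1 v_3}$ subtends a ``wide'' pair of edges while $\overline{v_2 v_4}$ subtends a ``narrow'' one, and then invoke the elementary fact that in a convex quadrilateral the diagonal joining the two vertices with larger incident angles is longer. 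If that synthetic argument can be made rigorous it avoids the trigonometric grind entirely; otherwise I fall back on the direct computation above.
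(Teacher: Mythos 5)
Your proposal is a plan rather than a proof, and both of its branches have problems. The primary (computational) branch stops exactly where the work is: you never establish that $diag_1^2 - diag_2^2 \geq 0$, you only state the hope that the difference ``factors into a product where one factor is manifestly positive,'' while simultaneously (and correctly) flagging that the edge lengths $r_1, r_2, r_4$ depend on $\theta_p$ and $\theta_q$ separately, so no such collapse is automatic. Worse, the constraint you propose to lean on, $\theta_p,\theta_q \in [\pi/4-2\alpha,\pi/4]$, is the single-point optimal-pair constraint, under which $\theta_p+\theta_q \leq \pi/2 < \pi/2+\alpha$, i.e.\ the hypothesis of Lemma~\ref{lem:grid2d1} can never hold. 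The lemma is used in the grid setting where the target is perturbed within $R(g)$ (cf.\ Lemma~\ref{lem:grid2d2}), so the relevant angles exceed those bounds; a proof confined to that interval proves nothing about the regime the lemma actually addresses.

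The fallback synthetic branch rests on a false ``elementary fact.'' In a convex quadrilateral the diagonal joining the two vertices with the larger incident angles is \emph{not} in general the longer one: in the rhombus with vertices $(0,1),(2,0),(0,-1),(-2,0)$ the angles at $(0,\pm 1)$ are the larger ones (about $127^\circ$), yet the diagonal joining them has length $2$ while the other diagonal has length $4$; the tendency is in fact the opposite. So that route cannot be ``made rigorous'' as stated. Your geometric intuition (that once $\theta_p+\theta_q$ exceeds $\pi/2+\alpha$ the vertex $a$ crosses to the far side of $g$ and $\overline{ac}$ becomes the long diagonal) is the right picture, but the paper turns it into a proof differently: it encloses the intersection quadrilateral in an isosceles trapezoid $aa'cc'$ (Fig.~\ref{fig:u3d2d}) by adding the segments $\overline{aa'}$ and $\overline{cc'}$, and observes that once $\angle s_p a a' \geq \angle s_p a b$ --- which is exactly $\theta_p+\theta_q \geq \pi/2+\alpha$ --- the diagonal $\overline{ac}$ is the longest segment of that trapezoid, hence exceeds $\overline{bd}$, which lies inside it. If you want a synthetic argument, that containment-in-a-trapezoid device (comparing against the longest segment of an enclosing figure, rather than an angle-vs-diagonal rule for general convex quadrilaterals) is the missing idea.
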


%\begin{proof}
%We will add two more line segments $\overline{aa'}$ and $\overline{cc'}$ to generate a isosceles trapezoid $aa'cc'$ as shown in Fig~\ref{fig:u3d2d}. We can see that $||\overline{ac}|| = ||\overline{a'c'}|| > ||\overline{bd}||$. Therefore, $diag_1 > diag_2$ when $\theta_i + \theta_j \geq \frac{\pi}{2} + \alpha$.  
%\end{proof}

\begin{lemma}\label{lem:grid2d2}
$\theta_p + \theta_q$ is maximized when the inner half-plane of both cones intersect $g^* = g \pm  \delta_d/2$. 
\end{lemma}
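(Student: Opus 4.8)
The plan is to analyze how the sum $\theta_p + \theta_q$ varies as the target point $g$ moves within the interval $R(g) = [g - \delta_d/2,\, g + \delta_d/2]$, while the two camera centers $s_p, s_q \in \overline{S}$ stay fixed. Recall that each $\theta$ is the angle between the bisector of a cone (wedge) and the viewing line $S$, and that the relevant cone at a fixed camera $s$ must satisfy $g \in Cone((s,\theta),g)$ — i.e., the target lies inside the wedge. Since we are maximizing the intersection diameter (which by Lemma~\ref{lem:grid2d1} is governed by $diag_1$ once $\theta_p + \theta_q \geq \pi/2 + \alpha$), the worst-case orientation pushes each wedge as far ``outward'' as the containment constraint $g \in Cone$ allows. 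The extreme admissible orientation is precisely the one where the inner half-plane of the wedge passes through the target — any further rotation would exclude $g$ from the wedge. So for a fixed target $g^\ast$, the largest attainable $\theta_p$ is obtained when the inner half-plane of $Cone_p$ passes through $g^\ast$, and similarly for $\theta_q$.

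First I would make this precise: for a fixed camera center $s$ and a target point $g^\ast$, write $\psi(s, g^\ast)$ for the angle that the ray $\overline{s g^\ast}$ makes with $S$. The inner half-plane of the worst-case wedge coincides with this ray, so the bisector sits at angle $\theta = \psi(s,g^\ast) + \alpha$ (the wedge half-angle is $\alpha$, and the inner half-plane is the one closer to $S$, hence at the smaller angle). Therefore $\theta_p + \theta_q = \psi(s_p, g^\ast) + \psi(s_q, g^\ast) + 2\alpha$, and maximizing the sum of orientations reduces to maximizing $\psi(s_p, g^\ast) + \psi(s_q, g^\ast)$ over $g^\ast \in R(g)$. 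Next I would compute these angles explicitly using the grid geometry: with $s_p = [-t/2, h]$, $s_q = [t/2, h]$ (relative to the nominal $g$, using the optimal-pair spacing $t$), a target displaced to $g^\ast = g + u$ with $u \in [-\delta_d/2, \delta_d/2]$ gives $\psi(s_p, g^\ast) = \arctan\!\big(h / (t/2 + u)\big)$ and $\psi(s_q, g^\ast) = \arctan\!\big(h / (t/2 - u)\big)$. I would then show the function $u \mapsto \arctan(h/(t/2+u)) + \arctan(h/(t/2-u))$ is maximized at the endpoints $u = \pm \delta_d/2$ — equivalently, that it is convex (or has no interior maximum) on the interval. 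By symmetry it is even in $u$, and one checks that moving $g^\ast$ off-center increases the larger of the two angles faster than it decreases the smaller, so the sum strictly increases away from $u = 0$; hence the maximum over $R(g)$ is at $g^\ast = g \pm \delta_d/2$.

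The main obstacle I anticipate is the monotonicity/convexity argument for $u \mapsto \psi(s_p,g^\ast) + \psi(s_q,g^\ast)$ on $[-\delta_d/2, \delta_d/2]$: naively differentiating $\arctan$ twice is messy, and one must be careful that the relevant interval (governed by $\delta_d = h$ and $t = 2h/\tan(\pi/4 - \alpha)$) stays in a regime where the endpoint is genuinely the maximizer rather than an interior critical point being a max. I would handle this by either (i) showing the first derivative $-h/(h^2 + (t/2+u)^2) + h/(h^2 + (t/2-u)^2)$ has the sign of $u$ on the interval — which follows since $(t/2 - u)^2 < (t/2 + u)^2$ exactly when $u < 0$, wait, one must track this carefully, so (ii) more robustly, argue directly that the only interior critical point is $u = 0$ by symmetry and that it is a local minimum, so the constrained maximum is forced to the boundary. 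A secondary subtlety is confirming that the ``inner half-plane through $g^\ast$'' configuration is actually admissible for both cameras simultaneously and that it indeed realizes the worst case in the sense of Lemma~\ref{lem:grid2d1} (i.e., that at this configuration $\theta_p + \theta_q \geq \pi/2 + \alpha$ holds, so that $diag_1$ is the operative diameter); this should follow from the choice $\delta_d = h$ together with the bound $\theta_p, \theta_q \in [\pi/4 - 2\alpha, \pi/4]$ established for the optimal pair, plus the extra rotation the displacement $\delta_d/2$ induces.
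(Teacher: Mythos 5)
Your proposal is correct and takes essentially the same route as the paper: you reduce to the extreme admissible orientation for each camera (inner half-plane through the displaced target), obtaining $\theta_p+\theta_q=\arctan\!\big(h/(t/2+u)\big)+\arctan\!\big(h/(t/2-u)\big)+2\alpha$, and argue via the derivative that this even function of $u$ is maximized at $u=\pm\delta_d/2$ --- which is exactly the paper's proof, only with your sign analysis being cleaner than the paper's explicit derivative expression. The momentary sign slip in your write-up is immaterial: $(t/2-u)^2<(t/2+u)^2$ precisely when $u>0$, so $f'(u)$ has the sign of $u$ and the endpoint maximization goes through as you describe.
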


%\begin{lemma}\label{lem:grid2d3}
%$\max(||x_i||,||x_j||) > diag_1$.
%\end{lemma}

%\begin{lemma}\label{lem:maxtheta}
%For any $g' \in R(g) = [g-\delta_d/2, g+\delta_d/2]$, $\max(||x_i||,||x_j||)$ is achieved when the inner half-plane of $Cone_i$ and $Cone_j$ intersects $g' = g \pm \delta_d/2$.
%\end{lemma}
It is clear that either $||x_p||$ or $||x_q||$ is always larger or equal to $diag_1$, which can be used to generate the worst case bound.
\begin{theorem}\label{thrm:grid3d1}
For all targets $g \in G$ and sensor grid $\overline{S}$ with resolution $\delta_d = h$, the worst case grid uncertainty $\overline{\varepsilon}(g)$ using only two cameras from $\overline{S}$ is bounded as follows
$$
\overline{\varepsilon}(g) \leqslant 1.72 \varepsilon_\infty
$$
\end{theorem}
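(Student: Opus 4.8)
The plan is to reduce the general statement to a single worst-case target inside one grid cell and then bound that target's uncertainty with elementary wedge geometry. First I would fix the reference quantity once and for all: because $\mathcal{S}$ is the whole (infinite) viewing plane, $\varepsilon(g',\mathcal{S})$ is translation-invariant, so $\varepsilon_\infty$ is the same constant for every $g'\in G$, and from the proof of Theorem~\ref{thrm:opt2cameras2dbound} it satisfies $\varepsilon_\infty \ge diag_1 = \frac{2h\sin(2\alpha)}{1-\sin(2\alpha)}$. Moreover, since $\overline{\varepsilon}(g')$ is a minimum over \emph{all} pairs of grid cameras, to prove $\overline{\varepsilon}(g')\le 1.72\,\varepsilon_\infty$ it is enough to exhibit one grid pair that does the job. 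So I would let $R(g)=[g-\delta_d/2,\,g+\delta_d/2]$ be the cell containing $g'$ (with $\delta_d=h$ and $g$ its center grid node), take $\{s_p,s_q\}\subseteq\overline{S}$ to be the two grid cameras at horizontal offset $h$ on either side of $g$ (the best grid approximation to $g$'s optimal pair), and reduce the theorem to the inequality $\varepsilon(g',\{s_p,s_q\}) \le 1.72\cdot\frac{2h\sin(2\alpha)}{1-\sin(2\alpha)}$.

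The heart of the argument is to show that this uncertainty, as $g'$ ranges over $R(g)$, is maximized at an endpoint $g^{*}=g\pm\delta_d/2$. As $g'$ slides, the two worst-case wedges (the $\arg\max$ defining $\varepsilon(g',\{s_p,s_q\})$) must be re-oriented to keep containing $g'$; I would track the orientation sum $\theta_p+\theta_q$. Lemma~\ref{lem:grid2d2} identifies the extreme: $\theta_p+\theta_q$ is largest exactly when $g'$ sits at an endpoint and both inner half-planes pass through it. Lemma~\ref{lem:grid2d1} says that once $\theta_p+\theta_q\ge\pi/2+\alpha$ the vertical diagonal $diag_1$ is the longer one, so the intersection-polygon diameter equals $diag_1$ in this regime and increases monotonically toward the endpoint. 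Using the left/right symmetry of the construction I would take $g^{*}=g+\delta_d/2$, which gives $\varepsilon(g',\{s_p,s_q\})\le diag_1(g^{*})$.

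The last step is a closed form for $diag_1(g^{*})$. In the extremal configuration one endpoint of $diag_1$ is $g^{*}$ itself (the common point of the two inner half-planes), so $diag_1$ lies on the vertical line $l$ through $g^{*}$; since its other endpoint also lies in both cones, $diag_1 \le \max(||x_p||,||x_q||)$, where $x_s = l\cap Cone(s)$ is the chord cut on $l$ by cone $s$ in its worst-case orientation. Using $\delta_d=h$ and the camera offsets $\frac{3h}{2}$ and $\frac{h}{2}$ from $g^{*}$ together with the wedge-chord formula (cf.\ Lemmas~\ref{lem:grid2dhori},~\ref{lem:grid2dvert}), one finds, for instance, $||x_q|| = \frac{h}{2}\bigl(\cot(\arctan(1/2) - 2\alpha)-2\bigr)$ for the near-side camera, which dominates $||x_p||$. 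Dividing by $\frac{2h\sin(2\alpha)}{1-\sin(2\alpha)}$, the resulting ratio is increasing in $\alpha$ over the operating range $\alpha\le 0.1$ rad (cf.\ Fig.~\ref{fig:diag1_maximum}) and attains its maximum $\approx 1.72$ there, which proves the claim.

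The step I expect to be the real obstacle is the middle one — showing rigorously that the worst target within a cell is at its boundary. Unlike the single-point case, the worst-case wedge orientations now depend on the target's position, so one must control how the intersection polygon deforms as $g'$ moves, handle the switch between $diag_1$ and $diag_2$ as the dominant diagonal, and prove monotonicity of the diameter along the cell; Lemmas~\ref{lem:grid2d1} and~\ref{lem:grid2d2} are exactly the ingredients that make this precise. A secondary point is that $g$'s ideal optimal pair lies at offset $\frac{h}{\tan(\pi/4-\alpha)}$ rather than on grid nodes, so one uses the nearest grid cameras (offset $h$) and charges the small perturbation to the constant; once the closed forms are in hand, the remaining numeric check (including the monotonicity in $\alpha$) is routine.
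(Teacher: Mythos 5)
Your proposal follows essentially the same route as the paper's proof: reduce to one grid cell, invoke Lemmas~\ref{lem:grid2d1} and~\ref{lem:grid2d2} to place the worst-case target at the cell boundary with the vertical diagonal dominating, bound that diagonal by the chord $x$ cut on the vertical line through the target (your tangent-form expression for $\|x_q\|$ is algebraically equivalent to the paper's law-of-sines formula $x=\frac{2h\sin(2\alpha)}{\sin(2\theta)-\sin(2\alpha)}$), and evaluate numerically at $\alpha\leq 0.1$ rad against $\varepsilon_\infty\geq\frac{2h\sin(2\alpha)}{1-\sin(2\alpha)}$ to obtain the constant $1.72$. The only difference is that you are slightly more explicit than the paper about using the nearest grid cameras (offset $\pm h$) rather than the exact optimal pair, which is the configuration that actually produces the stated constant.
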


\subsection{Relaxing planar scene and viewing plane assumptions}
% in camera positions and ground point planes in 2D}
So far, our analyses of the uncertainty bound are based on the parallel plane assumptions. Such assumptions are reasonable for some applications such as high altitude aerial imagery. 

In this section, we relax these assumptions so that the theorem can be applied to more general environments. Define horizontal and vertical variation as $\lambda_vh, \lambda_hh$, where $0 < \lambda_v, \lambda_h < 1$.
We will analyze the change in $\overline{\varepsilon}(g)$ when adding variation in both horizontal and vertical directions.
The new camera location $\hat{s}$ is generated by perturbing $s$  by $\lambda_vh,\lambda_hh$ amount in vertical and horizontal directions.
We analyze both effects from vertical and horizontal variations in Appendix~\ref{sec:appendix} and get the following results. 

\begin{theorem}\label{thrm:gridU2d}
For all targets $g \in G$ and sensor grid $\overline{S}$ with resolution $\delta_d = h$ and variation $\lambda_v,\lambda_h$, the worst case grid uncertainty $\overline{\varepsilon}(g)$ using only two cameras from $\overline{S}$ is bounded as follows
$$
\overline{\varepsilon}(g) \leqslant 1.72\frac{1+\lambda_v}{1-\lambda_h} \varepsilon_\infty
$$
\end{theorem}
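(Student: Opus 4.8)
The plan is to bootstrap from Theorem~\ref{thrm:grid3d1}, which already establishes $\overline{\varepsilon}(g) \le 1.72\,\varepsilon_\infty$ in the nominal configuration (viewing plane parallel to $G$ at distance $h$, grid resolution $\delta_d = h$). Recall that the proof of that theorem bounds $\overline{\varepsilon}(g)$ by $\max(\|x_p\|,\|x_q\|)$, the longer of the two chords that the vertical line $l$ through the target cuts out of the two uncertainty wedges, and that for a fixed $\alpha$ these chord lengths are constant multiples of $h$. The idea is therefore to track how this single scalar $\max(\|x_p\|,\|x_q\|)$ changes when a grid camera $s$ is replaced by the perturbed camera $\hat{s}$, displaced vertically by $\lambda_v h$ and horizontally by $\lambda_h h$, and to show the worst-case change is a multiplicative factor of at most $(1+\lambda_v)/(1-\lambda_h)$.

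First I would treat the two perturbation directions separately, as in Appendix~\ref{sec:appendix}. For the vertical component I would use that the whole construction (baseline $t$, wedge half-angle $\alpha$, the chords $\|x_p\|,\|x_q\|$, and $\varepsilon_\infty$ itself) is scale-invariant in the camera-to-ground distance: scaling the height by $(1+\lambda_v)$ scales every length by $(1+\lambda_v)$. Since the adversary can only push a camera away from $G$ by at most $\lambda_v h$ (pushing it closer shrinks the chord along $l$), the worst-case vertical effect is the factor $(1+\lambda_v)$; here I still need to check that raising only the ``critical'' camera — the one realizing $\max(\|x_p\|,\|x_q\|)$ — is the worst case, which follows because $\overline{\varepsilon}(g)$ is governed by that single camera's chord via Lemmas~\ref{lem:grid2d1} and~\ref{lem:grid2d2}. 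For the horizontal component, displacing $s$ by $\lambda_h h$ along $G$ tilts the cone axis, and in the worst case (the adversary moves the camera toward the target's nadir and rotates the wedge to keep $diag_1$ inside, in the spirit of Lemma~\ref{lem:diag1max}) this lengthens the chord $\|x\|$. A similar-triangles computation shows the chord scales like $h/(h-\lambda_h h) = 1/(1-\lambda_h)$ relative to the nominal chord, because the horizontal slack $\lambda_h h$ is effectively subtracted from the vertical clearance sitting in the denominator of the chord-length expression derived for $diag_1$ in the previous section.

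Composing the two worst cases gives a combined factor $(1+\lambda_v)/(1-\lambda_h)$ on $\max(\|x_p\|,\|x_q\|)$, after which Theorem~\ref{thrm:grid3d1} applied to the nominal geometry yields $\overline{\varepsilon}(g) \le 1.72\,\frac{1+\lambda_v}{1-\lambda_h}\,\varepsilon_\infty$. The comparison to $\varepsilon_\infty$ is legitimate because $\varepsilon_\infty$ is already the infimum over all views of the nominal geometry, so the perturbed two-camera uncertainty can only differ from the nominal two-camera uncertainty by the factors above, and the nominal two-camera uncertainty is in turn within $1.72$ of $\varepsilon_\infty$.

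The main obstacle I anticipate is the composition step: the adversary picks the perturbations of \emph{both} cameras and the rotations of \emph{both} wedges jointly to maximize the intersection diameter, so I must argue that this joint maximum does not exceed the product of the separately optimized vertical and horizontal factors. I would handle this by first fixing which camera is critical, then noting that the vertical perturbation acts as a global homothety (hence commutes with the wedge rotations and with the horizontal tilt) while the horizontal tilt affects only that camera's chord, and monotonically so; the two effects can then be bounded one after the other without interaction. Verifying this monotonicity and that no ``cross-term'' larger than $(1+\lambda_v)/(1-\lambda_h)$ appears — especially in the regime where $\lambda_h$ is close to $1$ and the tilt is large — is where the real work lies.
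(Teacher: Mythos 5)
Your proposal follows essentially the same route as the paper: its proof simply combines Lemma~\ref{lem:grid2dhori} (horizontal perturbation of a camera costs a factor $\frac{1}{1-\lambda_h}$ on the chord $\|x\|$, via similar triangles) and Lemma~\ref{lem:grid2dvert} (vertical perturbation costs a factor $1+\lambda_v$) with the nominal grid bound of Theorem~\ref{thrm:grid3d1}, which is exactly the decomposition you describe. The composition concern you flag is not treated any more explicitly in the paper, which just multiplies the two per-chord factors into the $1.72$ bound.
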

\begin{proof}
The result can be derived by combining Lemma~\ref{lem:grid2dhori}, Lemma~\ref{lem:grid2dvert} and Theorem~\ref{thrm:grid3d1}.
\end{proof}

We can see that small deviation from the camera position or the ground plane does not introduce significant uncertainty. 

\subsection{Problem 2 in 3D}

In 3D, we use the same grid resolution $\delta_d = h$ which is half of the distance between the optimal pair of cameras. The main result is
\begin{theorem}\label{thrm:grid3d2}
For all targets $g \in \mathcal{G}$ and sensor grid $\overline{S}$ with resolution $\delta_d = h$ and variation $\lambda_v,\lambda_h$, the worst case grid uncertainty $\overline{\varepsilon}(g)$ using only two cameras from $\overline{S}$ is bounded as follows
$$
\overline{\varepsilon}(g) \leqslant 2.47\frac{1+\lambda_v}{1-\lambda_h} \varepsilon_\infty
$$
\end{theorem}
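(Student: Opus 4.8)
The plan is to reduce the 3D statement to the 2D grid bound of Theorem~\ref{thrm:grid3d1} (and its perturbed version, Theorem~\ref{thrm:gridU2d}) by the same cross-section trick used in Theorem~\ref{thrm:opt2cameras3dbound}, paying only for the extra freedom a square grid cell has over an interval. Since $\mathcal{S}$ is the whole plane, $\varepsilon_\infty(g)$ is the same constant for every $g \in \mathcal{G}$, so it suffices to bound $\overline{\varepsilon}(g)$ for the \emph{worst} target inside a single grid cell. First I would fix such a $g$, translate so that it lies in one cell of $\overline{G}$, and let $g_0$ be the nearest grid corner; then $g$ is offset from $g_0$ by at most $\delta_d/2 = h/2$ along each grid axis, the extremal case being $g$ at the center of the cell, at distance $h/\sqrt{2}$ from $g_0$. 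I then take $\{s_p, s_q\} \subseteq \overline{S}$ to be the optimal pair for $g_0$, aligned with one grid axis; because $\delta_d = h$ is exactly half the baseline $2h$ of the optimal configuration, $s_p$ and $s_q$ are genuine grid points.

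Next I would carry out the cross-section reduction. Both cone axes pass through $g$, so the plane $\Pi$ spanned by $s_p$, $s_q$ and $g$ contains both axes; each right circular cone is mirror-symmetric across $\Pi$ and meets it in a wedge of angle $2\alpha$. Hence, exactly as in the proof of Theorem~\ref{thrm:opt2cameras3dbound} and Fig.~\ref{fig:u3d2d}, the diameter of the 3D intersection $Cone((s_p,\theta_p),g) \cap Cone((s_q,\theta_q),g)$ is bounded by the diameter of the planar polygon $Q_{pq} = \Pi \cap Cone_p \cap Cone_q$. Inside $\Pi$ we are left with a planar grid instance: a baseline $\overline{s_p s_q}$ of length $2h$, the two parallel lines $\Pi \cap \mathcal{S}$ and $\Pi \cap \mathcal{G}$ at effective separation $h' = h/\sin\beta$ where $\beta$ is the inclination of $\Pi$ to $\mathcal{G}$, and a target $g$ whose foot on the baseline is offset from the midpoint. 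Because $g$ lies within distance $h/\sqrt{2}$ of $g_0$ in the ground plane, $\beta$ — and therefore $h'$ and the offset — are bounded explicitly in terms of $h$, the extremal value again being attained at the cell center.

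Finally I would feed this perturbed planar instance into the 2D machinery: Lemma~\ref{lem:grid2d1} and Lemma~\ref{lem:grid2d2} still identify the worst orientations $\theta_p,\theta_q$ (attained when the inner half-planes meet the extremal target), and the chain $diag_1 \le \varepsilon_\infty \le diag_2$ together with the law-of-sines computation behind Theorem~\ref{thrm:grid3d1} yields the numeric ratio $2.47$ in place of $1.72$; the external variation $\lambda_v,\lambda_h$ contributes the extra factor $\frac{1+\lambda_v}{1-\lambda_h}$ exactly as in Theorem~\ref{thrm:gridU2d}, giving $\overline{\varepsilon}(g) \le 2.47\frac{1+\lambda_v}{1-\lambda_h}\varepsilon_\infty$. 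The main obstacle is the middle step: one must verify that the 3D intersection really cannot protrude past the diagonal of the planar cross-section $Q_{pq}$, and then track how the tilt of $\Pi$ (equivalently, the inflated separation $h'$ together with the off-center projection of $g$) degrades the 2D grid bound, confirming along the way that the cell center is the extremal target. Everything else is a re-run of the 2D computations with slightly larger constants.
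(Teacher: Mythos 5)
Your proposal follows essentially the same route as the paper, whose own argument for this theorem is only a brief sketch: reduce to the 2D grid analysis (Theorem~\ref{thrm:grid3d1}, Theorem~\ref{thrm:gridU2d}) with the target now perturbed in \emph{both} grid directions within a cell (worst case at the cell center, offset $h/\sqrt{2}$), which inflates the constant from $1.72$ to $2.47$, and then multiply by the $\frac{1+\lambda_v}{1-\lambda_h}$ factor from Lemma~\ref{lem:grid2dhori} and Lemma~\ref{lem:grid2dvert}. Like the paper, you leave the 3D-to-planar cross-section step and the numerical evaluation yielding $2.47$ asserted rather than derived, so your attempt matches the paper's proof in both structure and level of detail.
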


The proof is similar to the 2D case. It is extended to include perturbations in both $x$ and $y$ directions which slightly increases the bounds, which is shown in Appendix~\ref{sec:appendix} Figure~\ref{fig:grid3dv2}.

%\begin{figure}[h]
%\centering
%	\includegraphics[width=0.5\textwidth]{}
%	\caption{Camera grid: top down view}
%	\label{fig:camgridtopdown}
%\end{figure} 

\begin{figure}[h]
\centering
	\includegraphics[width=0.2\textwidth]{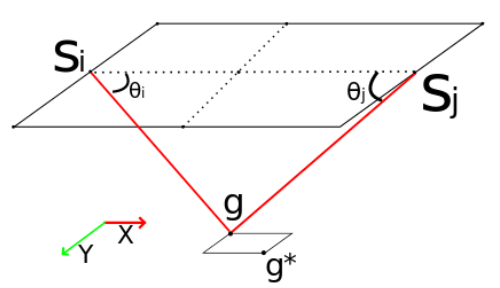}
	\caption{Camera grid in 3D: $g$ is perturbed to $g^*$ to achieve worst case uncertainty.}
	\label{fig:grid3dv2}
\end{figure} 

Theorem~\ref{thrm:grid3d2} allows us to bound the geometric error even in the presence of variations in both viewing and scene planes. However, it does not address visibility: variations in the scene can cause occlusions which can block camera views. In the next section, we address this issue. 

\section{Multi-Resolution View Selection}
In this section, we explore how to extend our previous camera view grid approach to non-planar regions such as orchards and forests. The parallel plane assumption can produce good results with high altitude, but will be insufficient to model non-planar regions. For this purpose, we propose a multi-resolution approach, which generates multiple camera view grids in a coarse to fine manner, to reconstruct more general regions. 

The input to our method is a surface mesh generated using sparse points clouds from a SLAM method such as  ORB-SLAM~\cite{murORB2}. It then outputs a subset of the views such that each face of the mesh can be \emph{well-covered}, that is, covered by at least 3 cameras separated by the current grid resolution. To ensure coverage quality, we double the grid resolution at each iteration so that the minimum distance between cameras is bounded. We present the details in Section~\ref{sec:viewselection}.

As the scene becomes more complex, the multi-resolution approach is able to adapt the terrain. For a given grid resolution, we iterate through all triangles and if they are well-covered by the current subset of views, those views will be added to the solution. However, the potential views that can see the triangle are limited due to occlusion and matching quality. Therefore, we introduce a visibility cone for each triangle in Section~\ref{sec:mesh}  to limit the search space.  

Similar to~\cite{krause2014submodular} and~\cite{hoppe2012photogrammetric}, we also generate scene meshes to reason about the geometry. The main difference of our work is that first, we do not require a secondary visit to the scene. The existing trajectory of views can be sufficient enough to cover the environment in most cases.  Second, we generalize the visibility for each triangle mesh such that well-covered views can be predicted instead of histogram method ~\cite{hoppe2012photogrammetric} that is strongly case sensitive. 

\subsection{Visibility Cone}
\label{sec:mesh}

\begin{figure}
    \centering
    \includegraphics[width=0.4\textwidth]{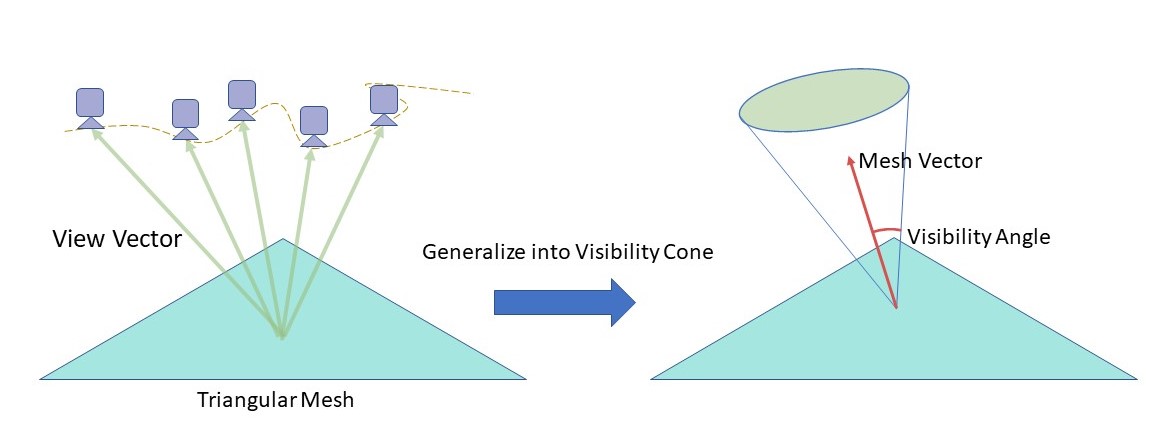}
    \caption{The visibility cone generated from visible cameras}
    \label{fig:visibilitycone}
\end{figure}

A camera is defined to be visible to a triangle mesh when it contains 2D feature of a point around the mesh. A viewing vector for a triangle is defined as the vector pointing from the center of the triangle to the corresponding camera as shown in Figure~\ref{fig:visibilitycone}. 
The mesh vector is then the average of all viewing vectors for that triangle mesh. We also define the visibility angle of each triangle as the average angle between all viewing vector. 
We can therefore predict the visibility of a triangle using both the visibility angle and the mesh vector. Essentially, we generate a visibility cone, where the direction of the cone is the mesh vector and the aperture is the visibility angle. We do not consider the effects of viewing angles since all the views are assumed to be facing downwards, which can be easily maintained with a gimbal stabilizer. 
Unlike the approaches from ~\cite{hoppe2012photogrammetric} that extract the histogram for each mesh triangle, we bound the region of possible visible camera views using the mesh visibility.

\subsection{Coarse to Fine View Selection}
\label{sec:viewselection}
After identifying the visibility cone for each triangle, we utilize our previous proposal of the camera grid in a coarse to fine manner.
\begin{figure}
    \centering
    \includegraphics[width=0.2\textwidth]{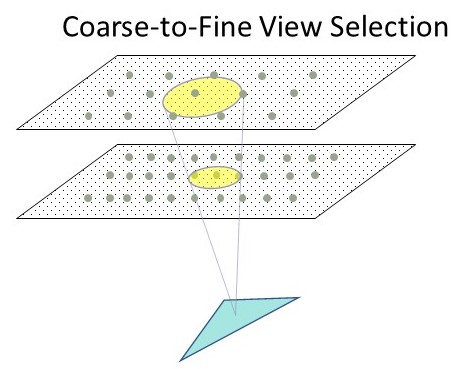}
    \caption{Multi-resolution view selection for each triangular mesh, where the camera views (only in one level) intersect with the visibility cone are added to the solution.}
    \label{fig:multireso}
\end{figure}

\begin{algorithm}
\caption{View Selection. Let $M = \{m_1,m_2,...\}$ be all triangle meshes and $J = \{s_1,s_2,...\}$ be all camera poses from the trajectory. Let $\pi(m_i,J)$ be the function that output all cameras in $J$ that are within the visibility cone of  $m_i$.}
\begin{algorithmic}
\REQUIRE set initial grid resolution $R$, set solution $sol=[]$
\WHILE{ when $M$ is not empty}
    \STATE {Pick camera grid $S_R \subseteq J$ with spacing $R$}
    \FORALL{$m_i \in M$}
                \STATE{$S = S_R \bigcap sol$}
        \IF{$|\pi(m_i,S)| \geq 3$}
            \STATE{$sol = \pi(m_i,S) \bigcap sol$}
            \STATE{remove $m_i$ from $M$}
        \ENDIF
    \ENDFOR
    \STATE{$R = R/2$};
\ENDWHILE
\STATE{Output final selected views $sol$}
\end{algorithmic}
\end{algorithm}

For a given grid resolution, we iterate through all faces of the mesh and check their visibility cones against current subset of views. For each face, if the visibility cone contains at least 3 camera views from the current subset of views, then those views will be added to the solution as shown in Figure~\ref{fig:multireso}. Those faces covered by 3 or more cameras will not be considered in the next iteration. To ensure the quality of the selected views, we impose that for each face, there are at least 3 views visible to the mesh so that feature matching error can be reduced. Since we also increase the grid resolution by two fold for each iteration, the chosen views for a specific mesh guarantee a minimum spacing. Giving the grid spacing $R$ at the first iteration, after $k$ iterations, the minimum spacing between all views will be $\frac{1}{2^k}R$ instead of arbitrarily small spacing that reduces reconstruction quality.

\section{Evaluation}

\begin{table*}[t]
\caption{The comparison of average reprojection error and reconstruction time for the two experiments}
\centering
\begin{tabular}{|c||c|c|c||c|c|c|}
\hline
 & Original Frames & Avg Reprj Err & SFM time (min) &  Camera Grid Frames & Avg Reprj Err & SFM time (min)\\
\hline
Orchard: 30 meters Flight & 416 & 0.842 & 313.6 & 76 & 0.934 & 4.1\\ 
\hline
Orchard: 10 meters Flight & 375 & 0.724 & 374.7 & 84 & 0.842 & 4.4\\ 
\hline
 & Original Frames & Avg Reprj Err & Dense Recon (min) &  Multi-Resol Method & Avg Reprj Err & Dense Recon (min)\\
\hline
Orchard: 30 meters Flight & 875 & 0.863 & 1463 & 209 & 0.931 & 115\\ 
\hline
Orchard: 10 meters Flight & 893 & 0.944 & 1522 & 266 & 1.243 & 167\\ 
\hline
\end{tabular}
\label{tab:reconT}
\end{table*}
% \begin{table*}[t]
% \caption{The comparison of average reprojection error and reconstruction time for dense reconstruction}
% \centering
% \begin{tabular}{|c||c|c|c||c|c|c|}
% \hline
%  & Original Frames & Avg Reprj Err & Dense Recon (min) &  Coarse-to-Fine Approach & Avg Reprj Err & Dense Recon (min)\\
% \hline
% Orchard: 30 meters Flight & 875 & 0.863 & 1463 & 209 & 0.931 & 115\\ 
% \hline
% Orchard: 10 meters Flight & 893 & 0.944 & 1522 & 266 & 1.243 & 167\\ 
% \hline
% \end{tabular}
% \label{tab:denseReconT}
% \end{table*}

In this section, we present simulation results used for validating the uncertainty model and results followed by a real-world reconstruction performance using the coarse to fine view selection method.

\subsection{Simulations}
We used the following parameters of a GOPRO HERO 3 for simulations.
Resolution: $1920 \times 1080$, Field of view: $120^\circ \times 70^\circ$.
The calibration error in pixels was  $[0.2061, 0.2183]$. 
For all simulations we used an iMac with $3.3$GHz quad-core Intel Core i5 and $16$GB of RAM. 

{\em Model justification:}
We consider the following sources of uncertainty: finite resolution, calibration errors, camera center location, camera orientation.

\begin{figure}
\centering
	\includegraphics[width=0.5\textwidth]{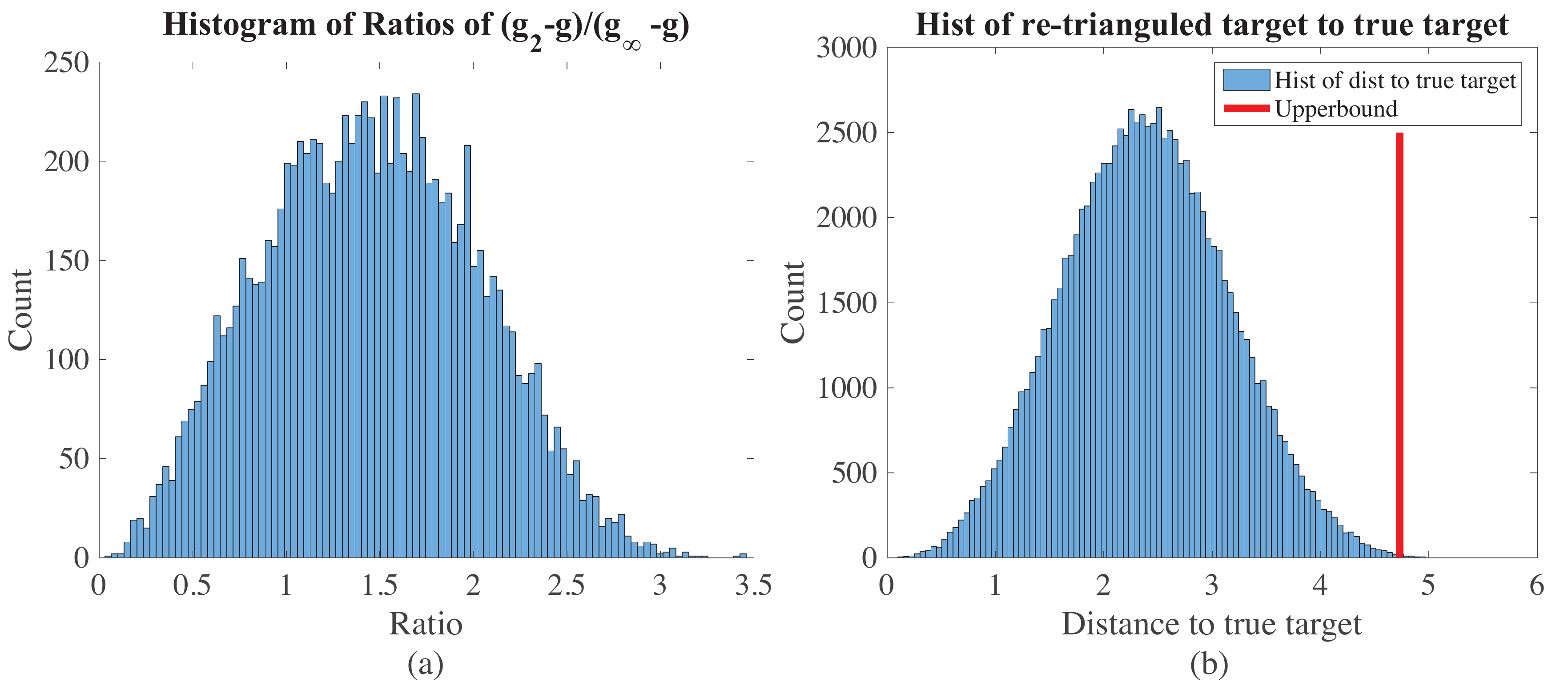}
	\caption{(a)~Distribution of  $\frac{|\hat{g}_2-g|}{|\hat{g}_\infty-g|}$ (b)~Histogram of the error: $|\hat{g}-g|$ with the following noise parameters: $|n_p| \leq 10$,$|n_s| \leq 0.1h$, $|n_\theta|\leq 1^\circ$}
	\label{fig:hist}
\end{figure}

The first two sources are less than one pixel. To investigate the role of camera orientation, we perturbed camera location
$\hat{s} = s + n_s$, where $s$ is the true location and $n_s$ is a uniform noise, and camera pose $\hat{\theta} = \theta + n_\theta$ where $\theta$ is the true orientation and $n_\theta $ is a uniform noise. Figure~\ref{fig:hist} (b) reports the result of triangulation error from two cameras in an optimal position. The height of the viewing plane was set to $10$m. The noise is set to $|n_p| \leq 10$, $|n_s| \leq 0.1h$, and $|n_\theta| \leq 1^\circ$. Each simulation was repeated $10^5$ times where the target location $\hat{g}$  is computed by triangulation and the error $|\hat{g}-g|$ is reported. Various noise levels are shown in the captions.
If we choose a bound of 10 pixels for the measurement error, it corresponds to  $\alpha < 0.1$ rad. The solid red line shows the predicted worst case error using our model. In general, reprojection error will be less than 10 pixels, otherwise it will be discarded as outliers. The state-of-the-art SLAM~\cite{zhang2015visual} algorithm's performance can go up to $0.0014 \ deg/m$ therefore, we set the camera position error to be less than $10 \%$ of the height while bounding the orientation error to be less than $1^\circ$. 
The histogram shows that the distance to the true target location is well bounded by the worst case uncertainty which is indicated as the vertical red line. It means that our uncertainty cone model can be relatively robust to system noises. 

Next, we study the effect of using two cameras vs. all cameras. 
We estimate the target pose using least squares from all cameras and report the ratio:
$\frac{|\hat{g}_2-g|}{|\hat{g}_\infty-g|}$ is plotted in Fig~\ref{fig:hist} (a). Here, $\hat{g}_2$ is the estimated target location using the optimal pair while $\hat{g}_\infty$ uses all the cameras. The simulation was repeated $10^4$ times. 
The ratio in Fig~\ref{fig:hist} (a) is less than 3.5, which means that using the optimal pair of cameras to triangulate the target is at most 3.5 times worse than triangulation using all cameras. This is because that the triangulation error using two or all camera views can be considered as a random process. Using only two camera views does not restrict the target as rigorous as using more all views, therefore, imposing at most 3.5 ratio of target position error.

\subsection{Real Experiment}
We collected two data sets using a GOPRO HERO 3 with a UAV flying over the same region with different height. The altitude ranges from 10 meters to 30 meters whereas the covered areas range between planar to more general orchard scenes. The orchard contains trees that are around 3 meters tall and ground elevation difference around 1 meter. We recorded around 5 minutes of videos, which is roughly 10000 frames. In order to speed up the reconstruction, we extracted every $30^{th}$ frames of the videos for mosaicking, which results in around 400 frames.
We used the commercial AgiSoft software for Structure from Motion for dense reconstruction and mosaicking to investigate the effect of view selection on reconstruction quality and reprojection error.

\subsubsection{Mosaic Quality}
\begin{figure}
\centering
	\includegraphics[width=0.4\textwidth]{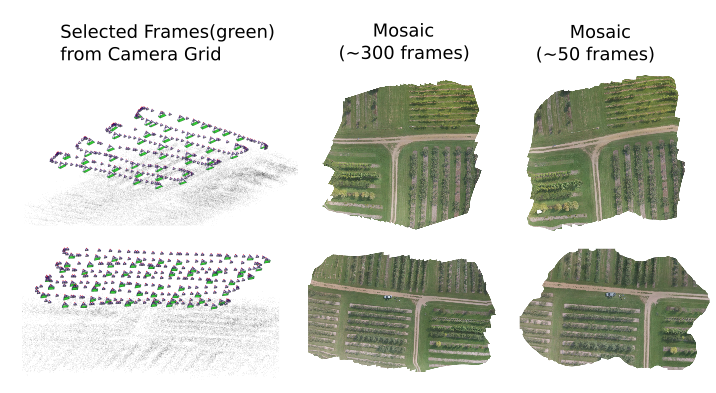}
	\caption{Select a subset of the original frames using the camera grid: reduces frames from $\sim$300 to $\sim$50 with comparable mosaic quality.}
\label{fig:mosaic}
\end{figure} 

We use the original selected frames for reconstruction and mosaicking~\cite{zhengqi}. Then, we use grid resolution of $\delta_d = h$ as shown in Figure~\ref{fig:mosaic} to select a subset of the frames for reconstruction and mosaicking. This means that if the drone is 10 meters above the ground plane, we select camera frames every 10 meters, which significantly reduces the number of cameras required. 
The total time required to reconstruct the same region decreased significantly while the reprojection error of each reconstruction remains low as shown in Table~\ref{tab:reconT}. For qualitative evaluation,  we stitched the images together by using the output pose from SFM and orthorectifying the views to compare the quality of the final mosaic. The resulting views are comparable, indicating that the proposed view selection mechanism does indeed perform comparable with respect to the original input set as shown in Figures~\ref{fig:mosaic}.

\subsubsection{Dense Reconstruction Quality}
We also examine the performance of the multi-resolution camera grid approach at the orchard data sets. 
For dense reconstruction, such data sets should be considered as a general scene and they cannot be treated as planar region, otherwise, features on different height cannot be covered.
We first use ORB-SLAM~\cite{murORB2} to extract camera poses and sparse point clouds. Since the point clouds still contains many inconsistent points, a filter is applied to remove noisy points too far from the surroundings. Then a mesh is built upon those points with maximum of 10,000 faces. We extracted the visibility cones of the mesh with the given trajectory and sampled a coarse-to-fine camera view grid in the same trajectory. The original data sets last around 5 minutes and contains more than 9000 images. Using the key frame selection method from ORB-SLAM, more than 3000 images are selected for reconstruction. It is unfeasible due to computational limitations. Therefore, we selected every $10^{th}$ frames with a total of around 900 frames. 
As shown in Figure~\ref{fig:viewselection}, the view selection algorithm selected a relatively sparser views in flat regions comparing to the densely packed views in more complex regions. The view selection algorithm will terminate when at least $95\%$ of the surfaces are covered. Therefore, there are still a few meshes that cannot be visible to the view subsets in the last iteration. The initial grid spacing is set to the height between the camera view plane and dominant ground plane: $\delta_d = h$. 
The reconstruction time and reprojection error comparison is shown in Table~\ref{tab:reconT}. It is clear that the computational time decreased by more than a magnitude and while the reprojection error does not increase too much. Essentially, our multi-resolution approach takes the scene geometry into consideration and removes redundant views that does not contribute much to the results.  
Visually, we can see that the dense reconstruction quality is very comparable shown in Figure~\ref{fig:denseComp30} taken from 30 meters above and in Figure~\ref{fig:denseComp10} taken from 10 meters above. Both results show that the reconstruction quality are almost identical.   
There is also an interesting observation: it is \emph{not} necessarily beneficial to have as many as views possible for dense reconstruction. As shown in Figure~\ref{fig:denseComp30} (a), more views actually smooth out the distinct geometry of the trees, leaving edges blending into each other. At a lower altitude, as shown in Figure~\ref{fig:denseComp10}, the dense reconstruction results are almost indistinguishable. 

\section{Conclusion}

In this paper, we studied view selection for a specific but common
setting where a ground plane is viewed from above from a parallel
viewing plane.  We showed that for a given world point, two views can be
chosen so as to guarantee a reconstruction quality which is almost as
good as one that can be obtained by using all possible views. Next, by
fixing these two views and studying perturbations of the world point,
we showed that one can put a coarse grid on the viewing plane and
ensure good reconstructions everywhere. Even though the reconstruction
quality can be improved by increasing the grid resolution, we showed
that a grid resolution proportional to the scene depth suffices to
guarantee a constant factor deviation from the optimal reconstruction.
We then showed how to extend the bound in the presence of perturbations of the viewing or scene planes.
However, as the scene geometry gets more sophisticated, occlusions
must be addressed.  For this purpose, we presented a multi-resolution view selection mechanism.
We also presented an application of these results to image mosaicking and scene reconstruction 
from (low altitude) aerial imagery.

Our results provide a foundation for multiple
avenues of future research.  An immediate extension is for scenes
which can be represented as surfaces composed of multiple planes.
Giving guarantees in the presence of occlusions 
 raises ``art gallery'' type research
problems~\cite{o1987art}.  Furthermore, rather than selecting views
apriori and in one shot, the view selection can be informed by the
reconstruction process as is commonly done in existing literature. Our
multi-resolution view selection method provides the starting point for a batch  scheme where a
coarse grid is used for reconstruction under the planar scene assumption
and further refined based on the intermediate reconstruction.

%% Use plainnat to work nicely with natbib. 
\newpage
\clearpage
\bibliographystyle{plainnat}
\bibliography{references}

\newpage
\clearpage
\appendix
\label{sec:appendix}

\section{More Reconstruction Results}
\begin{figure*}
\centering
	\includegraphics[width=1\textwidth]{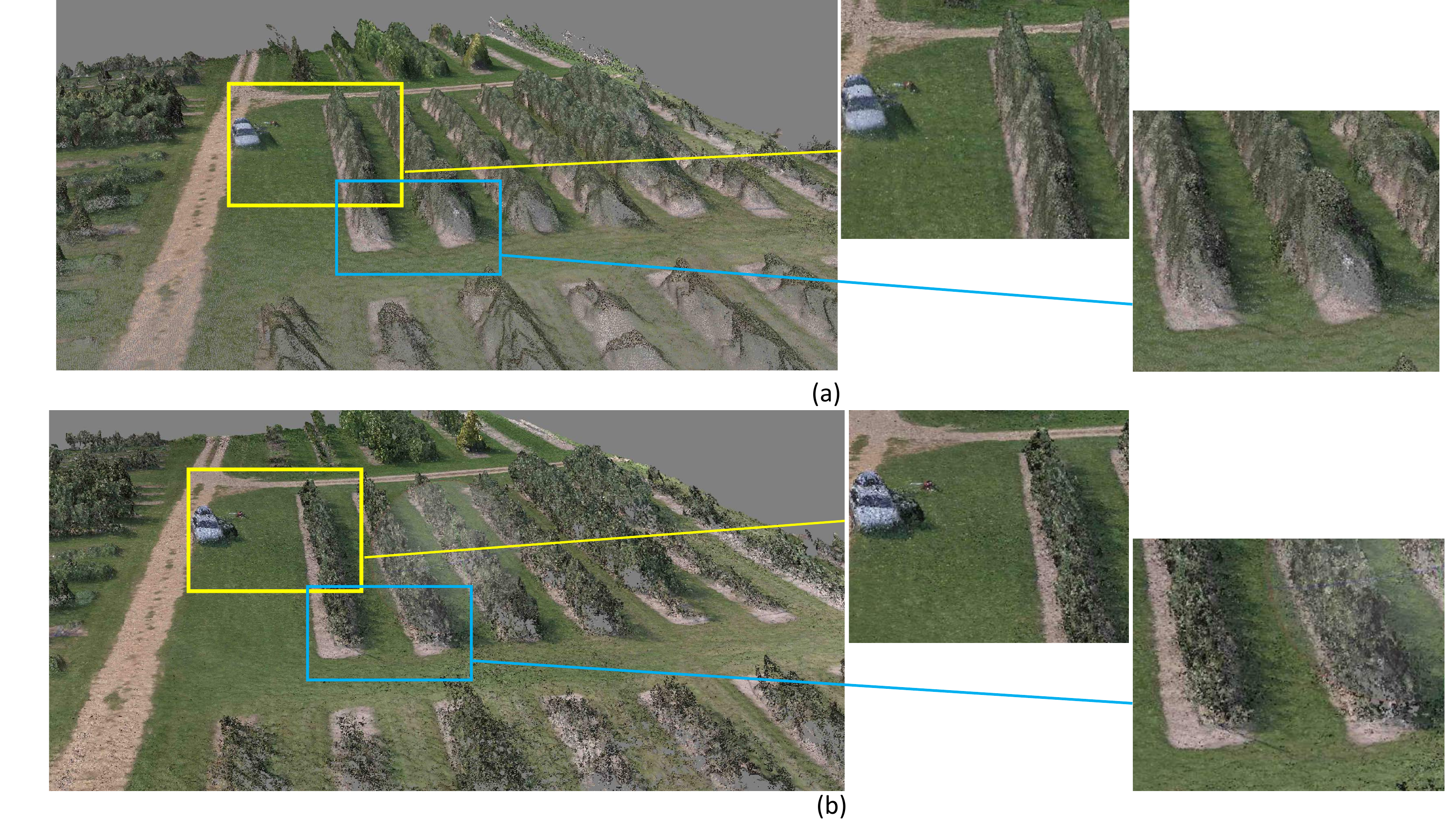}
	\caption{Comparison of dense reconstruction of the orchard taken at 30 meters altitude. (a) Dense Reconstruction using 875 images, with closeup views of the trees. (b) Dense Reconstruction using 209 images extracted using our multi-resolution view selection method, with closeup views of the trees. }
\label{fig:denseComp30}
\end{figure*}

\section{Lemmas and Theorems}
\subsection{Proof of Lemma~\ref{lem:diag1max}}
\begin{proof}
We prove the lemma by contradiction:
Suppose there exists a camera $s_k \in A-\{s_p,s_q\}$ such that
$Cone((s_k,\theta_k),g)$ intersects $\overline{v_1v_3}$ at point $u_1,u_2$, where $u_1 \leq v_1$ and $u_2 \geq v_3$ as shown in Fig~\ref{fig:campq} (b); 
Since $Cone((s_k,\theta_k),g)$ must contain target $g$, $u_1 = v_1$. 
We know that $u_1,u_2$ are on the vertical line passing through $g$, we can formulate $\overline{u_1v_2}$ using the law of sine of the triangle $\triangle(s_ku_1u_2)$.
\begin{align*}
\frac{\overline{u_1u_2}}{\sin(2\alpha)} &= \frac{h/\sin(\theta_k-\alpha)}{\sin(\pi/2-\theta_k-\alpha)} \\
\overline{u_1u_2} &= \frac{2h\sin(2\alpha)}{\sin(2\theta_k)-\sin(2\alpha)}
\end{align*}
Since $u_2 \geq v_3$, we want to find the minimum $\overline{u_1u_2}$ by choosing different $s_k \neq s_p,s_q$, which is equivalent to minimizing $\overline{u_1u_2}$ w.r.t. $\theta_k$. 
Thus, $\overline{u_1u_2}$ is minimized when $\sin(2\theta_k) = 1$, which results in $\theta_k = \pi/4$.
By substituting $\theta_k=\pi/4$, $\overline{v_1v_k} = \frac{2h\sin(2\alpha)}{1-\sin(2\alpha)} = diag_1$. It means that either $s_k = s_q$ or $\overline{u_1u_2} \geq \overline{v_1v_3}$, both of  which contradict with our assumption. 
%\vtxt{not really: length does not imply containment}
\end{proof}

\subsection{Proof of Lemma~\ref{lem:upperbound2camerasValue2d}}
\begin{proof}
Using small angle approximation, we get $\sin(\alpha) \approx \alpha$ and $\cos(\alpha)\approx 1$ and $\alpha^2 \approx 0$.
The angles are constrained such that $\theta_p, \theta_q \in [\pi/4-2\alpha, \pi/4]$.

\begin{align*}
diag_1 &= ||r_1^2+r_2^2-2\cdot r_1 \cdot r_2 \cdot \cos(\theta_p+\theta_q)||_2 \\
		&\approx ||2t^2\alpha^2 + 2t^2\alpha^2 - 4t^2\alpha^2 \cos(\theta_p+\theta_q)||_2 \\
		&= 2t\alpha||1-\cos(\theta_p+\theta_q)||_2
\end{align*}
$\max(diag_1) \leq 2t\alpha$ and $\min(diag_1) \geq \sqrt{1-4\alpha} \cdot 2t\alpha$

\begin{align*}
diag_2 &= ||r_1^2+r_4^2-2\cdot r_1 \cdot r_4 \cdot \cos(\pi - \theta_p-\theta_q + 2\alpha)||_2 \\
%		&\approx ||2t^2\alpha^2 + 2t^2\alpha^2 + 4t^2\alpha^2[\cos(\theta_p+\theta_q)-2\alpha\sin(\theta_p+\theta_q)]||_2 \\
		&\approx 2t\alpha||1+\cos(\theta_p+\theta_q)-2\alpha\sin(\theta_p+\theta_q)||_2
\end{align*}
$\max(diag_2) \leq \sqrt{1+2\alpha} \cdot 2t\alpha$ and $\min(diag_2) \geq \sqrt{1-4\alpha} \cdot 2t\alpha$.
Therefore,$diag_2 \leq \sqrt{\frac{1+2\alpha}{1-4\alpha}} diag_1$ and $1-4\alpha$ will not be negative since $\alpha$ must be less than $0.25$ to satisfy small angle approximation.
Given that $\varepsilon_2 = \max(diag1,diag2)$, we can conclude 
$$\varepsilon_2 \leq \frac{1+2\alpha}{1-4\alpha} \frac{2h\sin(2\alpha)}{1-\sin(2\alpha)}$$
\end{proof}

\subsection{Proof of Lemma~\ref{lem:grid2d1}}
\begin{proof}
We will add two more line segments $\overline{aa'}$ and $\overline{cc'}$ to generate a isosceles trapezoid $aa'cc'$ (Fig~\ref{fig:u3d2d}). When the angle $\angle{s_paa'} \geq \angle{s_pab}$, the diagonal $\overline{ac}$ will be the longest line segments in the trapezoid $aa'cc'$. Therefore, when $\angle{s_paa'} \geq \angle{s_pab}$, that is $\theta_p + \theta_q \geq \frac{\pi}{2} + \alpha$, is satisfied, $||diag_1|| > ||diag_2||$.
\end{proof}

\subsection{Proof of Lemma~\ref{lem:grid2d2}}
\begin{proof}
 First, when the inner half planes of both $Cone(s_p)$ and $Cone(s_q)$ intersect above $g$, it is clear that by moving the intersection down to $g$, $\theta_p + \theta_q$ is increased.
Now assume target $g$ is moving along the $x$ axis (Fig~\ref{fig:grid3dv2}) by some length $m$, where $m \leq \delta_d/2$. 
We can formulate $\theta_p + \theta_q$ as a function of $m$ and the distance between the cameras as 
\begin{align*}
f(m) = \theta_p + \theta_q = \tan^{-1}(\frac{h}{h/\tan(\pi/4-\alpha) - m}) + \\
 \tan^{-1}(\frac{h}{h/\tan(\pi/4-\alpha) + m}) + 2\alpha
\end{align*}
We can get the derivative $\frac{df(m)}{dm}$ as 
\begin{align*}
\frac{d}{dm} f(m) &= \{2m(2\cos(2\alpha)+2\cos(2\alpha)\sin(2\alpha)\} \cdot \\ &\{2m^2\sin(2\alpha)+2m^4\sin(2\alpha)+4m^2\sin^2(2\alpha) \\
&+m^4\sin^2(2\alpha)+m^4+4\}^{-1}\\
\end{align*}
Since $\frac{df(m)}{dm} \geq 0$, $\theta_p + \theta_q$ keeps increasing and is maximized  at target $g^* = g \pm \delta_d/2$.
%It is clear that the derivative is always positive. Therefore, $\theta_i + \theta_j$ is a monotonically increasing function with $m$, which concludes the proof. 
\end{proof}

\subsection{Proof of Theorem~\ref{thrm:grid3d1}}
\begin{proof}
The intersection length $x$ is obtained using the law of sines. 
\begin{align*}
\frac{x}{\sin(2\alpha)} &= \frac{h/\sin(\theta-\alpha)}{\sin(\frac{\pi}{2} - \theta - \alpha)}
		x &=\frac{2h\sin(2\alpha)}{\sin(2\theta)-\sin(2\alpha)}
\end{align*}
When the inner half-plane of $Cone(s_p)$ and $Cone(s_q)$ intersect $g \pm \delta_d/2$, $x$ is maximized.
We can now compute directly the worst case uncertainty when $\alpha \leq 0.1$ rad which gives the desired result.
\end{proof} 

\subsection{Proof of Lemma~\ref{lem:grid2dhori}}
\begin{figure}[h]
\centering
	\includegraphics[width=0.4\textwidth]{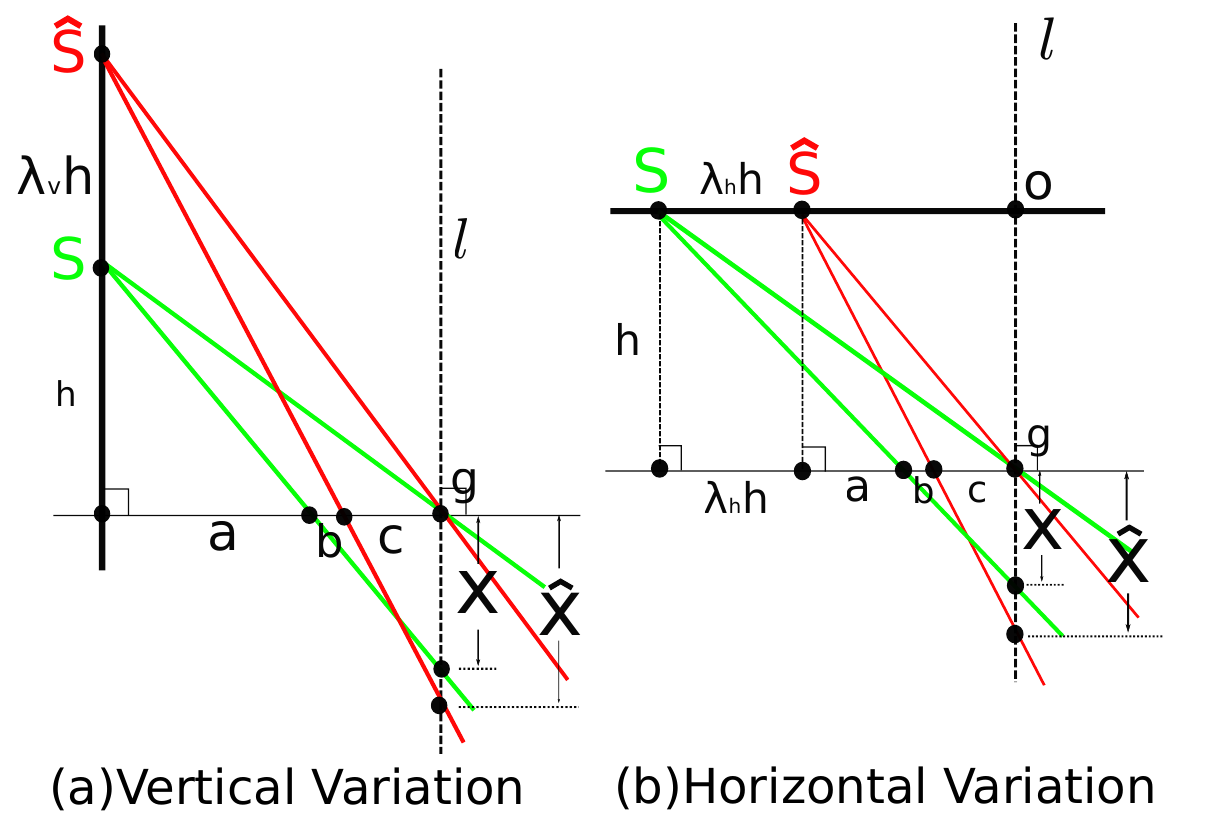}
	\caption{Variation in horizontal and vertical positions}
	\label{fig:variation}
\end{figure} 
First, we analyze the effects of horizontal variation $\lambda_h$. 

\begin{lemma}\label{lem:grid2dhori}
Let $s = (s_x,s_y)$ be a camera location in an optimal pair for target $g \in \overline{G}$. Let $\hat{s} = (s_x \pm \lambda_hh, s_y)$ obtained by perturbing $s$ in the horizontal direction. 
Let $\hat{x} = l \cap Cone(\hat{s})$ and $x = l \cap Cone(s)$. 
$$||\hat{x}|| \leq \frac{1}{1-\lambda_h} ||x||$$. 
\end{lemma}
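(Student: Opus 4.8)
The plan is to reduce the statement to the closed form for the intersection length already obtained in the proof of Theorem~\ref{thrm:grid3d1}, namely $\|x\| = \frac{2h\sin 2\alpha}{\sin 2\theta - \sin 2\alpha}$, where $\theta$ denotes the worst-case cone orientation (equivalently, the shallowest admissible line-of-sight angle) for the camera--target pair. Because $\hat s$ and $s$ lie at the same height $h$ and differ only by a horizontal offset of magnitude $\lambda_h h$, the sole effect of the perturbation is to change $\theta$ into a new worst-case angle $\hat\theta$ determined by the displaced apex; so it is enough to bound the ratio $\|\hat x\|/\|x\| = \frac{\sin 2\theta - \sin 2\alpha}{\sin 2\hat\theta - \sin 2\alpha}$ by $\frac{1}{1-\lambda_h}$, where the sign of the perturbation is chosen adversarially (moving the camera away from $g$, which lengthens $\hat x$).

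The first step is to express $\theta$ and $\hat\theta$ through the horizontal camera--target distances $d$ and $\hat d = d \pm \lambda_h h$, using the fact --- established in the proofs of Lemmas~\ref{lem:grid2d1} and~\ref{lem:grid2d2} --- that in the worst case the inner half-plane of the cone passes through $g$. In the grid configuration $d$ is comparable to $h$ (it equals $h/\tan(\pi/4-\alpha)$ at a grid point and never drops below $h(1-\lambda_h)$ in the regime of interest), so a horizontal shift of $\lambda_h h$ is a \emph{relative} shift of at most $\lambda_h$, i.e.\ $\hat d \le d/(1-\lambda_h)$. The second step is to substitute into $\|x\|$, apply the small-angle simplifications already used elsewhere in the paper ($\sin 2\alpha \approx 2\alpha$, with $\sin 2\theta$ linearized about the worst-case orientation), reduce $\|\hat x\|/\|x\|$ to a rational function of $d,\hat d,h,\alpha$, and check that it is at most $\frac{1}{1-\lambda_h}$.

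I expect the last estimate to be the only real obstacle: one must track how the worst-case orientation $\hat\theta$ migrates as the apex moves, and verify that the induced growth of $\|x\|$ is no worse than the relative displacement $\frac{1}{1-\lambda_h}$ even though $\|x\|$ is increasing and slightly convex in the camera--target distance (the convexity is precisely what the bound $d \gtrsim h$ has to absorb). Before grinding through the trigonometry I would try a cleaner geometric route: since both apexes sit at height $h$, place both worst-case wedges together with the vertical line $l$ in a single similar-triangles diagram and read off directly that the two intersection segments are in the ratio of the corresponding apex-to-$l$ distances --- which is exactly $\frac{1}{1-\lambda_h}$. Either way, Lemma~\ref{lem:grid2dvert} and hence Theorem~\ref{thrm:gridU2d} then follow by the same template.
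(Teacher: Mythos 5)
Your overall direction matches the paper --- the paper's own proof is exactly the ``similar-triangles diagram with the vertical line $l$'' route you mention as a fallback --- but as written your proposal has a genuine gap: the one step you yourself identify as ``the only real obstacle'' (showing that the ratio $\|\hat x\|/\|x\|$, after the worst-case orientation migrates with the displaced apex, stays below $\tfrac{1}{1-\lambda_h}$) is never carried out, and the shortcut you offer in its place is wrong as stated. The two worst-case wedges are \emph{not} homothetic copies of each other: both are constrained to contain $g$, so their inner half-planes pass through $g$ at different elevation angles, and hence the two intersection segments are not ``exactly in the ratio of the apex-to-$l$ distances.'' Writing $d$ for the horizontal apex-to-$l$ distance and $\phi=\arctan(h/d)$, one has $\|x\| = \frac{d\sin 2\alpha}{\cos\phi\,\cos(\phi+2\alpha)}$, so $\frac{\|\hat x\|}{\|x\|} = \frac{\hat d}{d}\cdot\frac{\cos\phi\cos(\phi+2\alpha)}{\cos\hat\phi\cos(\hat\phi+2\alpha)}$; for the outward perturbation the second factor is $<1$, so the distance ratio is only an upper bound, not an equality. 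Moreover the distance ratio itself is $\frac{d+\lambda_h h}{d} = 1+\lambda_h h/d$, which is \emph{not} ``exactly $\tfrac{1}{1-\lambda_h}$''; it is $\le \tfrac{1}{1-\lambda_h}$ only because at the optimal-pair location $d = h/\tan(\pi/4-\alpha)\ge h \ge h(1-\lambda_h)$, a fact you gesture at but must invoke explicitly. Repaired into this chain of inequalities, your geometric route becomes essentially the paper's proof, which chains similar-triangle ratios into $\frac{\|\hat x\|}{\|x\|} \le \frac{\lambda_h h + a}{a+b} \le \frac{h}{h-\lambda_h h} = \frac{1}{1-\lambda_h}$.

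A second loose end: you assert that the adversarial sign of the perturbation is ``moving the camera away from $g$, which lengthens $\hat x$.'' But the unperturbed configuration has $\theta=\pi/4$, which \emph{maximizes} $\sin 2\theta$ in the closed form $\|x\|=\frac{2h\sin 2\alpha}{\sin 2\theta - \sin 2\alpha}$, so perturbing in \emph{either} horizontal direction decreases $\sin 2\theta$ and lengthens the intersection; that the outward displacement is the worse of the two (the paper disposes of this by appealing to Lemma~\ref{lem:diag1max}) needs its own justification, and without it the inward case --- where the apex-to-$l$ distance \emph{shrinks} and the similar-triangle bound above does not apply verbatim --- is left uncovered.
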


\begin{proof}
From Lemma~\ref{lem:diag1max}, we can see that when sensor is at location $\hat{s} = (s_x + \lambda_hh,s_y)$, $||\hat{x}||$ is maximized. Therefore, $||\hat{x}|| \geq ||x||$. From Fig~\ref{fig:variation}, we can get the following relationship using similar triangles: 
$\frac{||x||}{b+c} = \frac{h}{\lambda_hh + a}$
 and $\frac{||\hat{x}||}{c} = \frac{h}{a+b}$. 
We can get the following result.
\begin{align*}
\frac{||\hat{x}||}{||x||} &= \frac{c(\lambda_hh+a)}{(a+b)(b+c)}  \leq \frac{\lambda_hh + a}{a+b}\\
& \leq \frac{h}{h-\lambda_hh} \leq \frac{1}{1-\lambda_h}
\end{align*}
\end{proof}

\subsection{Proof of Lemma~\ref{lem:grid2dvert}}

Then, we add vertical perturbation $\lambda_v h$ in between the viewing plane and the ground plane.
\begin{lemma}\label{lem:grid2dvert}
Let $s = (s_x,s_y)$ be a camera location in a optimal pair for target $g \in \overline{G}$. Let $\hat{s} = (s_x, s_y \pm \lambda_vh)$ obtained by perturbing $s$ in the vertical direction. 
Let $\hat{x} = l \cap Cone(\hat{s})$ and $x = l \cap Cone(s)$. 
$$||\hat{x}|| \leq (1+\lambda_v) ||x||$$
\end{lemma}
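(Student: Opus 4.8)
The plan is to mimic the structure of the horizontal-perturbation argument in Lemma~\ref{lem:grid2dhori}, but now tracking how the cone (wedge) apexed at the perturbed camera $\hat{s} = (s_x, s_y \pm \lambda_v h)$ slices the fixed vertical line $l$ through $g$. First I would fix the coordinate frame exactly as before, with $g$ at the origin and $l$ the vertical line $x = 0$, so that $x = l \cap Cone(s)$ and $\hat{x} = l \cap Cone(\hat{s})$ are the two relevant intersection segments. The key geometric fact to extract is that the worst-case orientation of the perturbed wedge is the one that maximizes $\|\hat{x}\|$; by the same reasoning used in the proof of Lemma~\ref{lem:diag1max} and Lemma~\ref{lem:grid2dhori}, this worst case occurs when the camera is lowered toward the ground (the $s_y - \lambda_v h$ choice), since decreasing the height scales up the intersection segment, so it suffices to bound that case and conclude $\|\hat{x}\| \geq \|x\|$ always holds in the direction that matters.

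Next I would set up the similar-triangles (or law-of-sines) relation between $\|x\|$ and $\|\hat{x}\|$. Using the closed form $x = \frac{2h\sin(2\alpha)}{\sin(2\theta)-\sin(2\alpha)}$ from the proof of Theorem~\ref{thrm:grid3d1}, the only quantity that changes under a vertical perturbation is the effective height: replacing $h$ by $h(1+\lambda_v)$ (in the worst case, where the camera moves away from the ground, increasing the baseline of the intersecting triangle) while the angle $\theta$ is essentially preserved because the horizontal offset $t$ is unchanged and the small-angle regime keeps $\sin(2\theta)-\sin(2\alpha)$ stable to first order. This gives directly $\|\hat{x}\| = \frac{2h(1+\lambda_v)\sin(2\alpha)}{\sin(2\theta)-\sin(2\alpha)} = (1+\lambda_v)\|x\|$, hence $\|\hat{x}\| \leq (1+\lambda_v)\|x\|$. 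I would draw the analogue of Figure~\ref{fig:variation} showing the stacked similar triangles with heights $h$ and $(1+\lambda_v)h$ sharing the apex direction, which makes the ratio transparent without invoking the trigonometric identity at all.

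The step I expect to be the main obstacle is justifying that the wedge \emph{angle} $\theta$ (equivalently $\theta_p + \theta_q$ in the two-camera picture) does not move in an unfavorable direction when the height changes — i.e., that the perturbation in $\theta$ induced by moving the apex vertically is either negligible under the small-angle approximation or can only help. One must argue that the worst-case orientation for the perturbed camera still lies in the admissible interval $[\pi/4 - 2\alpha, \pi/4]$ and that the monotonicity of $x$ in $h$ dominates any second-order change in $\theta$. The cleanest way around this is to observe, exactly as in Lemma~\ref{lem:grid2dhori}, that one may \emph{first} rotate $Cone(\hat{s})$ to the orientation that maximizes $\|\hat{x}\|$ (only strengthening the bound we are trying to prove) and then compare purely via the height ratio of similar triangles sharing the line $l$ and the target $g$, so the angle never needs to be tracked explicitly. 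Combined with $\|\hat{x}\| \geq \|x\|$ this yields the two-sided statement $\|x\| \leq \|\hat{x}\| \leq (1+\lambda_v)\|x\|$, of which the displayed inequality is the relevant half.
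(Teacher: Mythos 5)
Your write-up has a genuine gap, and also an internal inconsistency about which direction of perturbation is the bad one. In your first paragraph you assert that the worst case is the camera being \emph{lowered} toward the ground ``since decreasing the height scales up the intersection segment''; this is backwards. Moving the apex away from $g$ both lengthens the distance to $l$ and steepens the elevation angle, and both effects lengthen the cut of the wedge with the vertical line $l$; the paper (invoking Lemma~\ref{lem:diag1max}) takes the maximizer to be the raised position $\hat{s} = (s_x, s_y + \lambda_v h)$, and your second paragraph silently switches to that case. More importantly, the core of your argument — take the closed form $x = \frac{2h\sin(2\alpha)}{\sin(2\theta)-\sin(2\alpha)}$, replace $h$ by $(1+\lambda_v)h$, and declare $\theta$ ``essentially preserved'' — does not go through. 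With the horizontal offset fixed, raising the apex by $\lambda_v h$ changes the elevation angle to $g$ from $\pi/4-\alpha$ to roughly $\arctan\bigl((1+\lambda_v)\tan(\pi/4-\alpha)\bigr)$; this is a first-order change ($\lambda_v$ is not assumed small, only $\lambda_v<1$), and it moves $\sin(2\theta)$ \emph{away} from its maximum, i.e.\ it shrinks the denominator and pushes $\|\hat{x}\|$ \emph{above} the pure height scaling. So the angle effect works against the bound you are trying to prove; it is neither negligible nor ``can only help,'' and the identity $\|\hat{x}\| = (1+\lambda_v)\|x\|$ you write down is not a consequence of the closed form.

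Your proposed fix in the last paragraph — rotate $Cone(\hat{s})$ to its worst orientation first and then compare ``purely via the height ratio of similar triangles sharing $l$ and $g$'' — assumes that the worst-case wedge from $\hat{s}$ through $g$ is a scaled copy of the worst-case wedge from $s$ through $g$, which is exactly what fails: the two apexes see $g$ at different angles, so the worst-case edge slopes differ and the two cuts of $l$ are not related by a simple height-ratio similarity. The paper's proof is not a pure scaling argument: from Fig.~\ref{fig:variation} it extracts two separate similar-triangle relations, $\frac{\|x\|}{b+c} = \frac{h}{a}$ and $\frac{\|\hat{x}\|}{c} = \frac{(1+\lambda_v)h}{a+b}$, and only then bounds $\frac{\|\hat{x}\|}{\|x\|} = \frac{ac(1+\lambda_v)}{(a+b)(b+c)} \leq 1+\lambda_v$ by discarding the factors $\frac{a}{a+b}\leq 1$ and $\frac{c}{b+c}\leq 1$; it is these slack factors that absorb the change in viewing angle. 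To repair your argument you would need to establish relations of this kind (or otherwise explicitly control the induced change in $\theta$), rather than asserting that the angle never needs to be tracked.
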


\begin{proof}
From Lemma~\ref{lem:diag1max}, we can see that when sensor is at location $\hat{s} = (s_x ,s_y+ \lambda_vh)$, $||\hat{x}||$ is maximized. Therefore, $||\hat{x}|| \geq ||x||$. From Fig~\ref{fig:variation}, we can get the following relationship using similar triangles:
$\frac{||x||}{b+c} = \frac{h}{a}$ and 
$\frac{||\hat{x}||}{c} = \frac{h + \lambda_vh}{a+b}$. 
We can get the following result.
\begin{align*}
\frac{||\hat{x}||}{||x||} &= \frac{ac(1+\lambda_v)}{(a+b)(b+c)}  \leq 1+\lambda_v
\end{align*}
\end{proof}

\section{Derivations}
\subsection{Wedge Intersection}
\label{sec:wedgeIntersection}
Using the law of sines over the triangle $s_pv_1v_2$, we get $\frac{r_1}{\sin(2\alpha)} = \frac{\overline{s_pv_1}}{\sin{\angle{s_pv_2s_q}}}$. We also have  $\angle{s_pv_2s_q} = \pi - 2\alpha-\angle{s_pv_1v_2} = \pi - 2\alpha - (\theta_p + \theta_q - 2\alpha) = \pi-\theta_p - \theta_q$. From  
$\triangle(s_pv_1s_q)$, we know that $\frac{\overline{s_pv_1}}{\sin(\theta_q-\alpha)} = \frac{t}{sin(\pi-\theta_p-\theta_q+2\alpha)}$. By combining both equations, we obtain:
$
r_1 = \frac{t\sin(\theta_q-\alpha)\sin(2\alpha)}{\sin(\theta_p+\theta_q-2\alpha)\sin(\theta_p+\theta_q)}
$
Using the same method, we have:
$
r_2 = \frac{t\sin(\theta_p+\alpha)\sin(2\alpha)}{\sin(\theta_p+\theta_q)\sin(\theta_p+\theta_q+2\alpha)}
$
From $\triangle(s_pv_3v_4)$, we get:
$
r_3 = \frac{t\sin(\theta_q+\alpha)\sin(2\alpha)}{\sin(\theta_p+\theta_q)\sin(\theta_p+\theta_q+2\alpha)}
$
Similarly, from $\triangle(s_qv_1v_4)$
$
r_4 =\frac{t\sin(\theta_p-\alpha)\sin(2\alpha)}{\sin(\theta_p+\theta_q-2\alpha)\sin(\theta_p+\theta_2)}
$

\end{document}